\documentclass[]{dmir}
\usepackage{graphicx} 
\usepackage{xcolor} 
 
\usepackage{natbib}
\usepackage{amsthm}
\usepackage{amsmath}
\usepackage{amssymb}
\usepackage{subcaption}
\usepackage{booktabs}
\usepackage{multirow}
\usepackage{float}
\usepackage{hyperref} 
\newtheorem{theorem}{Theorem}
\newtheorem{corollary}{Corollary}

\title{CDFM: Towards a General-Purpose Causal Discovery Foundation Model}
\abstract{
Causal discovery, the process of recovering underlying causal structures from observational data, is a fundamental pursuit across scientific disciplines. Over the past decades, numerous algorithms have been developed to tackle this challenge through workflows tailored to the specific causal mechanisms underlying each type of dataset, demonstrating effectiveness across a wide range of applications. However, as the volume and heterogeneity of real-world data continue to grow, this dataset-specific approach inevitably leads to a fragmented, test-driven paradigm that struggles to scale to the demands of modern scientific discovery. To address this, we formulate the Causal Discovery Foundation Model (CDFM) as a unified, general-purpose framework for zero-shot structural inference. To ensure reliable generalization across unknown domains, we first investigate the theoretical boundaries of causal identifiability, revealing the indispensable role of causal prior mechanisms in this process. Building on these insights, we formulate a principled variational framework that treats unknown causal mechanisms as latent variables and mathematically decomposes the intractable marginal likelihood into distinct, tractable learning modules. The variational decomposition provides a conceptual design principle for the architecture design of CDFM, while comprehensive causal knowledge guides the large-scale synthesis of our pretraining data. By pretraining on a massive, highly diverse space of synthetic structural causal models, CDFM successfully internalizes complex statistical asymmetries. Extensive experiments demonstrate that CDFM consistently outperforms traditional algorithms, driving a paradigm shift toward a general-purpose causal discovery foundation model. 
}

\author[1]{Jie~Qiao}
\author[1*]{Ruichu~Cai} 
\author[1]{Zijian~Li}
\author[1]{Weilin~Chen}
\author[1]{Pengfei~Hua}
\author[1]{Boyan~Xu}
\author[2]{Zhengming~Chen}
\author[2]{Zhifeng~Hao}
\author[3]{Peng~Cui}

\affiliation[1]{School of Computer Science, Guangdong University of Technology, Guangzhou, China}
\affiliation[2]{College of Mathematics and Computer, Shantou University, Shantou, China}
\affiliation[3]{Department of Computer Science and Technology, Tsinghua University, Beijing, China}

\authorcustomemails{\{qiaojie.chn,cuiruichu,leizigin,chenweilin.chn,hua13662626240,hpakyim,chenzhengming1103\}@gmail.com, haozhifeng@stu.edu.cn, cuip@tsinghua.edu.cn}

\correspondence{Ruichu Cai (cairuichu@gmail.com)}
\code{The code is available at \url{https://github.com/DMIRLAB-Group/CDFM}.}

\begin{document}

\maketitle

\section{Introduction}

Causal discovery, the process of recovering underlying causal structures from observational data, is a fundamental pursuit across scientific disciplines, with broad applications in fields such as economics~\citep{ghysels2016testing}, biology~\citep{sachs2005causal}, and social science~\citep{cai2016understanding}.
Over the past decades, numerous causal discovery algorithms have been developed, demonstrating effectiveness across a wide variety of datasets~\citep{gamella2025causal, mooij2016distinguishing, runge2020causality}. 

Despite these advancements, the field remains fundamentally bottlenecked by the traditional causal discovery workflow, which is highly fragmented, test-driven, and complex. As shown in Figure \ref{fig:paradigm_shift}, existing algorithms are meticulously designed to operate under specific causal mechanisms (e.g., linear non-Gaussianity \citep{shimizuLinearNonGaussianAcyclic2006} or additive noise \citep{hoyerNonlinearCausalDiscovery2009}, etc.). Consequently, these algorithms excel only when their underlying assumptions strictly hold. This creates isolated approaches that fail to generalize across heterogeneous real-world data, forcing practitioners into an exhaustive loop of manually re-evaluating algorithms and statistical tests.

As the volume and heterogeneity of data continue to grow, this manual, hypothesis-matching paradigm becomes increasingly untenable; it cannot scale to the demands of modern, automated scientific discovery. What is needed is not another assumption-specific algorithm, but a unified model that can automatically accommodate—or even infer—the appropriate causal structures and mechanisms directly from data. Hence, we are led to the central question: \textit{Is it possible to develop a foundation model that transcends rigid assumptions to enable a general-purpose approach to causal discovery?}

Recent foundation models for tabular data provide evidence that this strategy is plausible. Models such as TabPFN~\citep{hollmann_accurate_2025}, LimiX~\citep{zhang2025limix}, and TabICL~\citep{qu2025tabicl} learn reusable inductive biases from large synthetic or heterogeneous pretraining distributions. Related work has extended this idea to causal effect estimation~\citep{balazadeh2026causalpfn,robertson2026dopfn,ma2026foundation} and amortized causal discovery~\citep{lorchAmortizedInferenceCausal2022,swelam2025tabpfn}. However, a general formulation connecting causal identifiability, mechanism uncertainty, architecture design, and pretraining coverage remains underdeveloped.

We introduce the \textbf{C}ausal \textbf{D}iscovery \textbf{F}oundation \textbf{M}odel (CDFM), a framework for direct structural inference across heterogeneous and initially unknown causal mechanisms. CDFM is pretrained on diverse synthetic data sampled from a broad space of identifiable causal assumptions, incorporating rich causal knowledge. The resulting model emerges as a general-purpose causal discovery foundation model that is capable of zero-shot structural inference across diverse, unknown mechanisms.

\begin{figure}
    \centering
    \includegraphics[width=\textwidth]{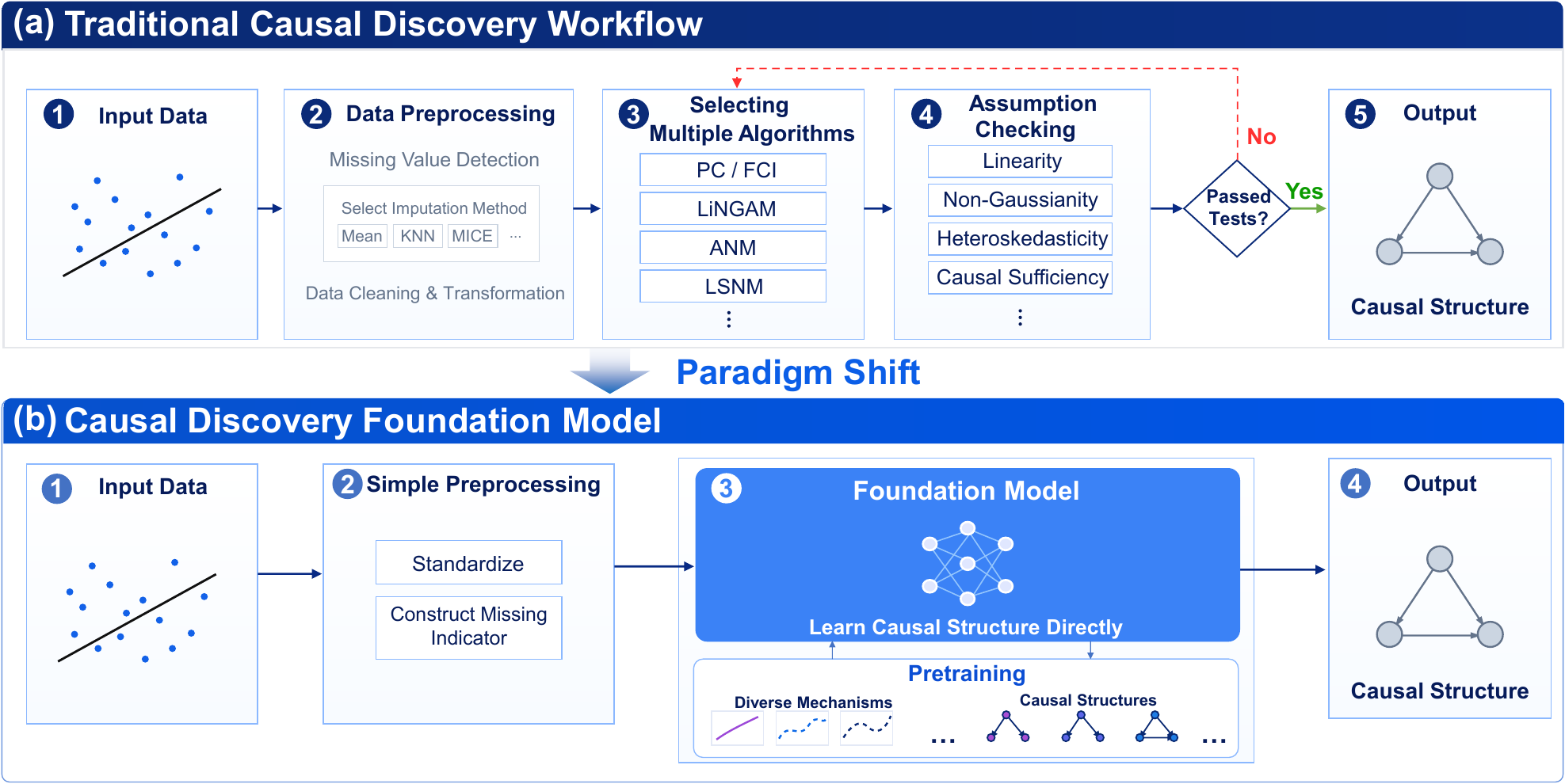}
    \caption{Paradigm shift from traditional causal discovery to CDFM. Instead of selecting algorithms and verifying assumptions for each dataset, CDFM performs lightweight preprocessing and directly infers causal structures using a pretrained foundation model. The model is pretrained on diverse synthetic causal mechanisms and structures, enabling generalized causal reasoning across heterogeneous data distributions.}
    \label{fig:paradigm_shift}
\end{figure}

The main contributions of this work are summarized as follows:
\begin{itemize}
    \item We formulate a causal discovery foundation model, a paradigm that is specifically tailored for general-purpose causal discovery, shifting the paradigm from specialized algorithm selection to a unified inference approach. To mathematically ground this, we unify foundation models and structural identifiability through a principled variational framework, treating unknown mechanisms as latent variables to translate the marginalization problem into a tractable learning objective.
    \item We design CDFM that aligns mechanism encoding, structure-aware reconstruction, and graph decoding with the theoretical decomposition, allowing the model to robustly infer causal structures directly from observational data.
    \item We introduce a comprehensive, large-scale pretraining driven by diverse causal knowledge. By synthesizing a massive hypothesis space of identifiable causal mechanisms and structures, we enable the model to internalize complex statistical asymmetries and effectively generalize to unknown generating processes, and extensive experiments demonstrate that CDFM achieves zero-shot generalization across diverse data distributions and real-world datasets.
    \end{itemize}

\section{Related Work}
The methodology of causal discovery from observational data has evolved significantly in recent years. In this section, we review the literature, tracing the progression from theoretically constrained algorithms to amortized inference and the generalized foundation models.

\paragraph{Constrained causal discovery.} The traditional approach to causal discovery relies on algorithms derived under specific constraints about the data-generating process. Typical independence constraint-based methods, such as PC and FCI \citep{spirtesCausationPredictionSearch2000}, exploit conditional independencies under the causal Markov and faithfulness assumptions, typically identifying structures only up to a Markov equivalence class (MEC). To resolve MEC ambiguities, additional Functional Causal Models (FCMs) are included for imposing strict structural constraints. Linear non-Gaussian models (LiNGAM) \citep{shimizuLinearNonGaussianAcyclic2006, shimizuDirectLINGAMDirectMethod2011} break symmetry by assuming linear equations with non-Gaussian noise. Additive noise models (ANM) \citep{hoyerNonlinearCausalDiscovery2009} and post-nonlinear models \citep{zhangIdentifiabilityPostNonlinear2009} extend this identifiability to nonlinear regimes. This family is further enriched by causal additive models \citep{buhlmannCAMCausalAdditive2014} and causal location-scale noise models \citep{immer2023identifiability, yinEffectiveCausalDiscovery2024}, which leverage dependent noise variance as an additional source of causal asymmetry. For discrete variables, a series of works have addressed ordinal, categorical, and count data \cite{petersIdentifyingCauseEffectDiscrete2010,niOrdinalCausalDiscovery2022,yaoCausalDiscoveryMixed2025,caiCausalDiscoveryDiscrete2018,qiao2026identifiability,qiao2024causal,xiang2024identifiability,qiaoStructuralHawkesProcesses2023}. 
Later advancements transitioned from combinatorial search to continuous optimization (e.g., NOTEARS \citep{zhengDAGsNOTEARSContinuous2018}), utilizing smooth algebraic characterizations of acyclicity. While these methods provide rigorous identifiability guarantees within their respective domains, they require a priori knowledge of the underlying mechanisms, lacking the generalizability to the heterogeneous real-world data where the true data-generating process is unknown.

\paragraph{Amortized causal discovery.}
To overcome the bottleneck of solving dataset-specific optimization problems, amortized inference reframes causal discovery as a supervised learning task. Methods like AVICI \citep{lorchAmortizedInferenceCausal2022} and CSIvA \citep{keLearningInduceCausal2023} train neural networks to infer causal graphs directly from observational and interventional samples. This learning-based framework has been subsequently expanded across various dimensions, including applications to time-series data \citep{pmlr-v177-lowe22a}, active causal induction using deep reinforcement learning \citep{annadani2024caasl}, and scalable conditional independence testing \citep{pmlr-v275-leiner25a}. Further methodological extensions \citep{thompson2026arrow, wu2025sea, yin2025learning, peng2026causcale} have demonstrated the empirical feasibility of large-scale causal discovery.
While these amortized models successfully internalize useful inductive biases from simulated data via empirical heuristics, a systematic theoretical framework for addressing structural identifiability across highly heterogeneous and unknown mechanisms remains unresolved.

\paragraph{Foundation models for tabular data.}
Large-scale pretraining has recently demonstrated remarkable in-context generalization on tabular and structured data tasks. Models such as LimiX \citep{zhang2025limix}, TabPFN \citep{hollmann_accurate_2025}, and TabICL \citep{qu2025tabicl} have shown that foundation architectures can effectively model complex, heterogeneous datasets. Within the causal inference domain, efforts like CausalPFN \citep{balazadeh2026causalpfn} and Do-PFN \citep{robertson2026dopfn} have successfully adapted these architectures for causal effect estimation. These advancements illustrate the profound potential of foundation models to generalize across disparate data distributions.

\paragraph{Toward causal discovery foundation models.}
The evolution of causal discovery reveals a fundamental paradigm shift: from isolated, assumption-bound algorithms to amortized inference, and ultimately, causal discovery foundation models. In this work, we systematically study the causal discovery foundation model, establishing a principled methodology for general-purpose causal discovery.

\section{Background and Preliminaries}
\label{sec: background preliminaries}

We represent the data-generating process using a Structural Causal Model (SCM). Let $\mathbf{X} = \{X_1, X_2, \dots, X_D\}$ be observed variables, where $D$ denotes the number of variables, and $\mathbf{L}$ be unobserved latent confounders. The system is governed by structural assignments:
\begin{equation}
    V_i := f_i(\mathbf{PA}_i, N_i), \quad \text{for } V_i \in \mathbf{V} = \mathbf{X} \cup \mathbf{L},
\end{equation}
where $\mathbf{PA}_i$ are direct causal parents, $f_i$ is a deterministic mechanism, and $N_i$ is exogenous noise. By marginalizing out $\mathbf{L}$, we represent the observed structure as an adjacency matrix $G \in \{0,1\}^{D \times D}$ encoding the causal structure over the observed variables after marginalizing out $\mathbf{L}$. In addition, to capture the effects of latent confounding, we augment the graph with bidirected edges, signifying the existence of at least one unobserved common cause $L \in \mathbf{L}$ that directly influences both variables. Note that if both $X_i \rightarrow X_j$ and latent confounding exist, we encode only the directed edge, and we restrict the target graph class to bow-free acyclic directed mixed graphs.

Given this formulation, our primary goal is to robustly infer the underlying causal graph $G$ from purely observational data. Crucially, unlike traditional approaches that rely on manually specifying a restrictive causal prior mechanism $M$, we aim to uncover these causal relationships across a massive space of unknown and heterogeneous mechanisms $\mathcal{M}$.

\section{Paradigm Shift: From Identifiability Theory to Foundation Models}\label{sec:identifiability}

As discussed earlier by Pearl \cite{pearl2010introduction}, behind any causal conclusion, there must be some causal assumption, untested in observational studies. The role of a foundation model is therefore not to remove assumptions, but to learn and adopt over a broad spectrum of assumption classes. To develop principled guidelines for the design of CDFM, enabling it to generalize across diverse causal mechanisms, we therefore first investigate the theoretical boundaries of causal identifiability and guide the architecture design of CDFM.

In causal discovery, underlying assumptions are typically encapsulated implicitly within structural causal models. However, as established by \cite{peters2017elements}, an SCM with independent noise alone is insufficient to identify the causal direction.

\begin{theorem}[Impossibility of Learning Causal Direction without Constraint] \label{thm:impossibility}
Let $X$ and $Y$ be two random variables. For any joint distribution $P_{X,Y}$, valid SCMs exist for both causal orderings simultaneously. That is, there exist measurable functions $f_X, f_Y$ and real-valued noise variables $N_X, N_Y$ such that:
\begin{equation}
\begin{aligned}
    Y &= f_Y(X, N_Y), \quad X \perp\!\!\!\perp N_Y, \\
    X &= f_X(Y, N_X), \quad Y \perp\!\!\!\perp N_X.
\end{aligned}
\end{equation}
\end{theorem}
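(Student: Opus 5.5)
By symmetry, it is enough to build one direction: a measurable $f_Y$ and a real noise variable $N_Y$ with $X \perp\!\!\!\perp N_Y$ and $Y = f_Y(X,N_Y)$ almost surely. The reverse direction follows by swapping the roles of $X$ and $Y$. The construction is the standard conditional-quantile (inverse-CDF) argument of \cite{peters2017elements}. I take $X$ and $Y$ to be real-valued, which is the setting of the theorem.

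\textbf{Step 1: the conditional CDF.} Define the conditional distribution function $F_{Y|x}(y) = P(Y \le y \mid X = x)$. Using a regular conditional distribution, which exists on $\mathbb{R}$, it can be chosen jointly measurable in $(x,y)$. Let $F^{-1}_{Y|x}(u) = \inf\{y : F_{Y|x}(y) \ge u\}$ be its generalized inverse, for $u \in (0,1)$. This inverse is also jointly measurable, because $\{(x,u) : F^{-1}_{Y|x}(u) \le y\} = \{(x,u) : F_{Y|x}(y) \ge u\}$.

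\textbf{Step 2: the noise.} The obstacle is that $F_{Y|X}(Y)$ is uniform only when the conditional laws have no atoms. To handle atoms, I would enlarge the probability space by an auxiliary $V \sim U(0,1)$, independent of $(X,Y)$. I then define the randomized probability integral transform
\begin{equation*}
N_Y = F_{Y|X}(Y^-) + V\bigl(F_{Y|X}(Y) - F_{Y|X}(Y^-)\bigr).
\end{equation*}
The key fact is that, conditional on $X = x$, $N_Y$ is uniform on $(0,1)$ and $F^{-1}_{Y|x}(N_Y) = Y$ almost surely. This is the classical result for a single distribution, applied here to the law $P_{Y|X=x}$.

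Verifying this measurably in $x$ is the main technical step. I would do it by computing $P(N_Y \le u \mid X=x) = u$ directly from the definition. Since this conditional law does not depend on $x$, it follows that $N_Y \perp\!\!\!\perp X$ and $N_Y \sim U(0,1)$.

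\textbf{Step 3: conclusion.} Set $f_Y(x,n) = F^{-1}_{Y|x}(n)$. Then $Y = f_Y(X,N_Y)$ almost surely, with $X \perp\!\!\!\perp N_Y$. If an exact equality is wanted rather than an almost-sure one, I would redefine $Y$ on a null set, which leaves the joint distribution unchanged; the statement only concerns $P_{X,Y}$.

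The same construction with $X$ and $Y$ interchanged gives $f_X$ and $N_X$ with $Y \perp\!\!\!\perp N_X$. Hence both causal orderings admit valid SCMs for any $P_{X,Y}$.
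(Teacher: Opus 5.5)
Your proposal is correct and uses the same conditional inverse-CDF (quantile transform) construction as the paper, applied in each direction by symmetry. The only difference is in the general case: you handle atoms in the conditional laws (discrete or mixed data) directly with the randomized probability integral transform on an enlarged space, whereas the paper argues only the continuous case via inverse transform sampling and cites the Functional Representation Lemma for the rest, so your version is self-contained and makes the almost-sure identity $Y = f_Y(X,N_Y)$ explicit.
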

Crucially, because this holds for any joint distribution, the variables are perfectly symmetric. Thus, identifying the true causal graph is mathematically impossible without imposing prior assumptions on the underlying structural equations. Traditional algorithms resolve this by manually asserting a specific, known causal prior mechanism (e.g., linear non-Gaussianity \citep{shimizuDirectLINGAMDirectMethod2011}, or additive noise model \citep{hoyerNonlinearCausalDiscovery2009}). Given that the true mechanism $M^*$ is known, identifiability transforms into a structural model selection problem. The optimal graph minimizes the Kullback-Leibler (KL) divergence between the true distribution and the model distribution.
\begin{theorem}[Identifiability under Known Mechanism]
\label{thm:kl_known_mechanism}
Let the data $\mathbf{X}$ be generated by a true graph $G^*$ and a known causal mechanism $M^*$. For any alternative graph $G'$, the difference in expected log-likelihood under the true distribution $\mathcal{L}(G^*):= \mathbb{E}_{\mathbf{X}\sim P(\cdot\mid G^*,M^*)}\left[\log P(\mathbf{X}\mid G^*,M^*)\right]$ is bounded by the KL divergence:
\begin{equation}
\mathcal{L}(G^*) - \mathcal{L}(G') = D_{\mathrm{KL}}\big[ P(\mathbf{X} | G^*, M^*) \parallel P(\mathbf{X} | G', M^*) \big] \ge 0.
\end{equation}
If $M^*$ belongs to a strictly identifiable class, the inequality is strict for every $G'\neq G^*$, and equality holds if and only if $G'=G^*$.
\end{theorem}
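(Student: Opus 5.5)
The plan is to prove the identity by direct expansion of the definitions, then obtain nonnegativity from Gibbs' inequality and strictness from the definition of a strictly identifiable class. Write $P^* := P(\cdot\mid G^*,M^*)$ and $P' := P(\cdot\mid G',M^*)$. Interpret $\mathcal{L}(G')$ consistently with the definition of $\mathcal{L}(G^*)$: it is the expected log-likelihood of the alternative model under the true distribution, $\mathcal{L}(G') := \mathbb{E}_{\mathbf{X}\sim P^*}[\log P'(\mathbf{X})]$. I would state this convention explicitly, since the theorem only defines $\mathcal{L}(G^*)$. By linearity of expectation,
\begin{equation*}
\mathcal{L}(G^*)-\mathcal{L}(G') = \mathbb{E}_{\mathbf{X}\sim P^*}\left[\log \frac{P^*(\mathbf{X})}{P'(\mathbf{X})}\right] = D_{\mathrm{KL}}\big[P^*\parallel P'\big].
\end{equation*}
Some care with integrability is needed. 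If $P^*$ is not absolutely continuous with respect to $P'$, both sides equal $+\infty$ under the usual conventions. Otherwise I would assume $\mathcal{L}(G^*)$ is finite, so that the subtraction is well defined.

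Nonnegativity is Gibbs' inequality. Apply Jensen's inequality to the concave function $\log$:
\begin{equation*}
-D_{\mathrm{KL}}[P^*\parallel P'] = \mathbb{E}_{P^*}\left[\log \frac{P'}{P^*}\right] \le \log \mathbb{E}_{P^*}\left[\frac{P'}{P^*}\right] \le \log 1 = 0.
\end{equation*}
Equality holds if and only if $P'=P^*$ almost everywhere. Consequently, $\mathcal{L}(G^*)=\mathcal{L}(G')$ exactly when the two graphs, paired with the mechanism $M^*$, induce the same observational distribution.

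For the strict part, I would make precise what ``strictly identifiable class'' means. A mechanism class $M^*$ is strictly identifiable if the map $G\mapsto P(\cdot\mid G,M^*)$ is injective on the target graph class, i.e.\ on bow-free ADMGs. LiNGAM and ANM are the standard examples; for ANM the extra qualifier ``nonlinear'' or a genericity condition is needed. Under injectivity, $G'\neq G^*$ gives $P'\neq P^*$ on a set of positive measure, so by the equality case above $D_{\mathrm{KL}}[P^*\parallel P']>0$. Conversely, $G'=G^*$ trivially gives zero.

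The main obstacle is a modeling subtlety: $P(\mathbf{X}\mid G',M^*)$ is ambiguous when $M^*$ is a class of functions rather than a fixed parameterization. If $P'$ is meant as the best-fitting member, $\sup_{\theta}P(\mathbf{X}\mid G',M^*,\theta)$ in expected log-likelihood, the identity becomes $\min_\theta D_{\mathrm{KL}}$. Strictness then requires that $P^*$ lie outside the closure of the model family for $G'$. I would handle this first by reading $M^*$ as fixing the full parameterization, so that $P'$ is a single distribution, and then remark that the projection version follows under the identifiability assumption, stated as a positive KL gap, i.e.\ $\inf_\theta D_{\mathrm{KL}}>0$ for $G'\ne G^*$.
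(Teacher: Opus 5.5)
Your proposal is correct and follows the paper's proof. Both expand the definitions so that $\mathcal{L}(G^*)-\mathcal{L}(G')=D_{\mathrm{KL}}[P^*\parallel P']$, use nonnegativity of KL with equality iff $P'=P^*$, and read strict identifiability as injectivity of $G\mapsto P(\cdot\mid G,M^*)$ to get strictness. Your extra care goes beyond the paper: you prove Gibbs' inequality explicitly, handle integrability, and read $M^*$ as a fully fixed parameterization rather than a best-fit projection, whereas the paper adopts that reading only tacitly.
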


Unlike traditional algorithms, a foundation model drives a paradigm shift in which it does not assume a single, known mechanism $M^*$. Instead, it aims to map an observational dataset $\mathbf{X} \in \mathbb{R}^{N \times D}$ directly to the causal graph $G$ across a vast hypothesis space of diverse, unknown mechanisms $\mathcal{M}$. This shift means the mapping inevitably amounts to a Bayesian model selection problem, where the optimal likelihood requires marginalizing over the entire mechanism space $\mathcal{M}$:
\begin{equation}
\label{eq:marginal_likelihood}
    \mathbb{E}_{\mathbf{X}}\left[ \log P(\mathbf{X},G) \right] = \mathbb{E}_{\mathbf{X}}\left[ \log \int_{\mathcal{M}} P(\mathbf{X} | M, G) P(G | M)P(M) dM \right].
\end{equation}
One may still identify the causal structure using the exact marginal likelihood as long as the space of causal prior mechanisms $\mathcal{M}$ satisfies a strict identifiability property. We say the space $\mathcal{M}$ is strictly identifiable if no alternative structure and mechanism can perfectly mimic the true generating distribution. Formally, $P(\mathbf{X}|M^*,G^*) \neq P(\mathbf{X}|M_{G'},G')$ holds for all $G' \neq G^*$, where $M_{G'} = \arg\max_{M \in \mathcal{M}} \mathbb{E}_{\mathbf{X}}[\log P(\mathbf{X}|M,G')]$ represents the optimal mechanism choice under $G'$. If the space possesses this intrinsic mutual exclusivity, the exact marginal likelihood uniquely identifies the true causal graph.
\begin{theorem}[Identifiability by Marginal Likelihood]
\label{thm:marginal_identifiability}
Let a dataset $\mathbf{X}^{(N)}$ of $N$ i.i.d. samples be generated by the true graph $G^*$ and true mechanism $M^* \in \mathcal{M}$. If the space of causal prior mechanisms $\mathcal{M}$ satisfies the strict space identifiability property, then the exact marginal likelihood uniquely identifies the true causal graph almost surely:
\begin{equation}
\lim_{N \to \infty} \frac{1}{N} \log P(\mathbf{X}^{(N)} | G^*) > \lim_{N \to \infty} \frac{1}{N} \log P(\mathbf{X}^{(N)} | G'), \quad \text{a.s.}, \quad \forall G' \neq G^*.
\end{equation}
\end{theorem}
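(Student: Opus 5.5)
The plan is to compare the normalized log marginal likelihoods of $G^*$ and $G'$ through a Laplace/BIC-type asymptotic expansion of the integral in \eqref{eq:marginal_likelihood}. Conditioning on $G$, write $P(\mathbf{X}^{(N)}\mid G)=\int_{\mathcal{M}} \prod_{n=1}^N P(\mathbf{x}_n\mid M,G)\, P(M\mid G)\,dM$, with $P(M\mid G)\propto P(G\mid M)P(M)$. I will take $\mathcal{M}$ to be a parametric family: a compact parameter set with likelihoods continuous in $M$. I will also assume the prior puts positive mass on every neighbourhood of $M^*$ and of each $M_{G'}$. The argument then has a lower bound for the true graph and an upper bound for every alternative.

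\emph{Lower bound for $G^*$.} Fix $\epsilon>0$. By continuity there is a neighbourhood $U_\epsilon$ of $M^*$ with prior mass $\pi_\epsilon>0$. On $U_\epsilon$ the expected log-likelihood $\mathbb{E}_{\mathbf{X}}[\log P(\mathbf{X}\mid M,G^*)]$ lies within $\epsilon$ of $\mathcal{L}(G^*)$. Restrict the integral to $U_\epsilon$. Apply a uniform strong law of large numbers, which is available because $\mathcal{M}$ is compact and there is an integrable envelope. This gives, almost surely,
\[
\liminf_{N\to\infty}\tfrac1N\log P(\mathbf{X}^{(N)}\mid G^*) \ge \mathcal{L}(G^*)-2\epsilon+\lim_{N\to\infty}\tfrac1N\log \pi_\epsilon=\mathcal{L}(G^*)-2\epsilon .
\]
Letting $\epsilon\to0$ yields $\liminf \ge \mathcal{L}(G^*)$.

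\emph{Upper bound for $G'$.} Bound the integral by its supremum: $P(\mathbf{X}^{(N)}\mid G')\le \sup_{M\in\mathcal{M}}\prod_n P(\mathbf{x}_n\mid M,G')$. By the uniform SLLN, $\limsup \frac1N\log P(\mathbf{X}^{(N)}\mid G')\le \sup_M \mathbb{E}_{\mathbf{X}}[\log P(\mathbf{X}\mid M,G')]=\mathbb{E}_{\mathbf{X}}[\log P(\mathbf{X}\mid M_{G'},G')]$ almost surely. The supremum is attained by compactness, which is exactly the $M_{G'}$ of the definition.

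Combining the two bounds, the gap is at least
\[
\mathcal{L}(G^*)-\mathbb{E}_{\mathbf{X}}[\log P(\mathbf{X}\mid M_{G'},G')]=D_{\mathrm{KL}}\big[P(\mathbf{X}\mid G^*,M^*)\,\|\,P(\mathbf{X}\mid G',M_{G'})\big].
\]
This is the same identity as in Theorem~\ref{thm:kl_known_mechanism}, now with $M_{G'}$ in place of $M^*$ for the alternative graph. Strict space identifiability gives $P(\mathbf{X}\mid M^*,G^*)\ne P(\mathbf{X}\mid M_{G'},G')$, and Gibbs' inequality is strict for distinct distributions, so the KL term is strictly positive. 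The limits in the statement should be read as $\liminf$ for $G^*$ and $\limsup$ for $G'$; both in fact equal their optimal expected log-likelihoods. There are finitely many DAGs/ADMGs on $D$ nodes, so the almost-sure statement holds simultaneously for all $G'\ne G^*$.

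The main obstacle is the uniform law of large numbers over the mechanism space. A pointwise SLLN does not control a supremum or integral over a possibly infinite-dimensional $\mathcal{M}$. I would first impose the regularity conditions above: compactness, continuity in $M$, and an integrable envelope $\sup_M|\log P(\mathbf{x}\mid M,G)|$. Under these, a Glivenko--Cantelli class argument, as in Wald's consistency proof, goes through. If $\mathcal{M}$ is nonparametric, I would instead require that it be covered by finitely many $\epsilon$-brackets for each $\epsilon$. I would state these as standing assumptions, since the theorem is not true without some such condition.
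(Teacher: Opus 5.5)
Your proposal is correct and follows the paper's route. Like the paper, you identify $\lim_{N\to\infty}\frac1N\log P(\mathbf{X}^{(N)}\mid G)$ with $\sup_{M\in\mathcal{M}}\mathbb{E}_{\mathbf{X}}[\log P(\mathbf{X}\mid M,G)]$, which is attained at $M^*$ for $G^*$ and at $M_{G'}$ for $G'$, and you conclude from the strictly positive KL gap guaranteed by strict space identifiability. The only difference is that you justify the limit step the paper attributes to ``standard asymptotic Bayesian theory'': you use a prior-mass lower bound for $G^*$, a supremum upper bound for $G'$, and a uniform SLLN, under compactness, continuity, envelope and prior-support conditions that the paper leaves implicit (including attainment of the argmax defining $M_{G'}$).
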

While Theorem \ref{thm:marginal_identifiability} guarantees that the true graph can be identified in principle, directly computing the exact integral in Eq. \eqref{eq:marginal_likelihood} is intractable. 

To bridge this gap, we formulate the learning process of CDFM through variational inference. Under uninformative uniform priors of $P(G)$ and $P(M)$, maximizing the marginal likelihood is mathematically equivalent to maximizing the joint likelihood. We introduce a parameterized variational distribution $Q(M \mid \mathbf{X})$, which is designed to internalize the mechanism and structural information within the architecture. This yields a tractable Evidence Lower Bound (ELBO) over the joint likelihood (see Appendix \ref{app:elbo_derivation} for derivation):
\begin{equation}
\label{eq:elbo}
\log P(G, \mathbf{X}) \ge \mathbb{E}_{Q(M | \mathbf{X})} \big[ \log P(\mathbf{X} | G, M) + \log P(G | M) \big] - D_{\mathrm{KL}}\big( Q(M | \mathbf{X}) \parallel P(M) \big),
\end{equation}
where the final KL divergence term serves as a regularizer over the mechanism space, which acts as a constant shift under a standard uninformative uniform prior.
This variational formulation explicitly decomposes the learning process of CDFM by decomposing it into three explicit components that govern the design of CDFM:
\begin{enumerate}
    \item \textbf{Mechanism Inference $Q(M | \mathbf{X})$}: The model extracts statistical footprints (e.g., noise properties, non-linearities) to infer a latent encoding of the active mechanism and the structural information.
    \item \textbf{Data Reconstruction $\log P(\mathbf{X} | G, M)$}: Theoretical identifiability requires the inferred mechanism and graph to perfectly explain the data. This provides a self-supervised gradient signal.
    \item \textbf{Graph Inference $\log P(G | M)$}: The model decodes the structural graph conditioned on the neurally encoded mechanism and its structural information.
\end{enumerate}

By this, with sufficient capacity and massive pretraining, the variational gap ($D_{\mathrm{KL}}$ between the true and approximate posterior) vanishes at the global optimum. Consequently, optimizing this ELBO implicitly maximizes the exact marginal likelihood. This guarantees that CDFM inherently respects the mathematical boundaries of causal identifiability:
\begin{corollary}[Identifiability of CDFM]\label{cor:elbo_identifiability}
Suppose the foundation model has sufficient capacity to reach the optimum variational ELBO. Let $\mathbf{X}^{(N)}$ be any test dataset with unknown causal graph $G^*$. In the inference phase, the model predicts the graph by maximizing the expected log-likelihood of the structural decoder:
\begin{equation}
    \hat{G} = \arg\max_{G \in \mathcal{G}} \mathbb{E}_{Q(M \mid \mathbf{X}^{(N)})} \big[ \log P(G \mid M) \big].
\end{equation}
Under the strict identifiability of the mechanism space $\mathcal{M}$ and a uniform prior $P(G)$, this structural inference uniquely recovers the true causal graph almost surely, i.e., $\hat{G} = G^*$ as $N \to \infty$.
\end{corollary}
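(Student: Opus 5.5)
The plan is to reduce the corollary to Theorem \ref{thm:marginal_identifiability}. The key intermediate claim is that, at the global optimum of the ELBO in Eq. \eqref{eq:elbo}, the decoder score $\mathbb{E}_{Q(M\mid\mathbf{X})}[\log P(G\mid M)]$ ranks graphs the same way the exact marginal likelihood $P(\mathbf{X}^{(N)}\mid G)$ does. Once that holds, the strict separation in Theorem \ref{thm:marginal_identifiability} transfers directly to $\hat G$.

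\textbf{Step 1: the optimum closes the variational gap.} Following Appendix \ref{app:elbo_derivation}, write the gap as
\[
\log P(G,\mathbf{X}) - \mathrm{ELBO} = D_{\mathrm{KL}}\big(Q(M\mid\mathbf{X}) \parallel P(M\mid\mathbf{X},G)\big).
\]
Under the sufficient-capacity assumption, the optimal $Q$ coincides with the true posterior, so the ELBO equals $\log P(G,\mathbf{X})$. With the uniform prior on $G$, this in turn equals $\log P(\mathbf{X}\mid G)$ up to an additive constant independent of $G$.

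\textbf{Step 2: relate the decoder term to the joint.} The aim is to show that the optimally trained decoder $P(G\mid M)$, averaged under $Q(M\mid\mathbf{X})$, acts as a posterior over graphs. Pretraining maximizes $\mathbb{E}_{(G,\mathbf{X})}\big[\mathbb{E}_Q \log P(G\mid M)\big]$ over pairs drawn from the synthetic SCM prior, which is a proper scoring rule for $G$. At the optimum, the induced predictive is therefore $P(G\mid\mathbf{X})\propto P(\mathbf{X}\mid G)P(G)$. Because $P(G)$ is uniform, $\arg\max_G$ of the decoder score equals $\arg\max_G \frac1N\log P(\mathbf{X}^{(N)}\mid G)$, up to terms that vanish as $N\to\infty$.

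\textbf{Step 3: apply the marginal-likelihood theorem.} Theorem \ref{thm:marginal_identifiability} gives a strict gap between $\lim\frac1N\log P(\mathbf{X}^{(N)}\mid G^*)$ and the same limit for any $G'\neq G^*$, almost surely. $\mathcal{G}$ is finite, since there are finitely many bow-free ADMGs on $D$ nodes. Hence for all sufficiently large $N$ the maximizer is $G^*$ almost surely, and $\hat G=G^*$.

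\textbf{Main obstacle.} I expect Step 2 to be the hardest. The corollary scores graphs by $\mathbb{E}_Q[\log P(G\mid M)]$ alone, which is neither $\log \mathbb{E}_Q[P(G\mid M)]$ nor the full ELBO including the reconstruction term, so Jensen's gap and the omitted $\log P(\mathbf{X}\mid G,M)$ term must be controlled. My first attempt would be to argue that as $N\to\infty$ the optimal $Q(M\mid\mathbf{X}^{(N)})$ concentrates on $M^*$. This concentration should follow from strict identifiability of $\mathcal{M}$, which makes the posterior over $(M,G)$ collapse to $(M^*,G^*)$. Under concentration the Jensen gap vanishes, and the decoder score reduces to $\log P(G\mid M^*)$. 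At optimum this places all its mass on $G^*$, because the pair $(M^*,G^*)$ is the unique maximizer of the reconstruction likelihood by Theorem \ref{thm:kl_known_mechanism}.
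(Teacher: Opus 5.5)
\paragraph{Verdict: genuine gap.} Your Steps 1 and 3 agree with the paper. It too closes the gap $D_{\mathrm{KL}}(Q(M\mid\mathbf{X})\parallel P(M\mid G,\mathbf{X}))$ and finishes with Theorem~\ref{thm:marginal_identifiability}, and your remark that $\mathcal{G}$ is finite is a useful addition. The gap is in how you resolve the obstacle you correctly flagged.

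\paragraph{Where the step fails.} Your last step says that once $Q$ concentrates on $M^*$, the score $\log P(G\mid M^*)$ puts all its mass on $G^*$ ``because $(M^*,G^*)$ is the unique maximizer of the reconstruction likelihood by Theorem~\ref{thm:kl_known_mechanism}.'' That theorem concerns $\mathbb{E}[\log P(\mathbf{X}\mid G,M^*)]$. This is precisely the reconstruction factor that the estimator $\hat G$ discards, and the theorem says nothing about the decoder factor $P(G\mid M^*)$. In the generative model $P(M)P(G\mid M)P(\mathbf{X}\mid G,M)$, the decoder factor does not see the data at all. If $M$ is literally a mechanism and $G$ is a priori independent of it, then $P(G\mid M^*)$ is just the uniform $P(G)$ and every graph is a maximizer. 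So the likelihood-based separation cannot be transferred to the decoder term this way.

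The concentration claim is also unsupported. Strict identifiability, as the paper defines it, only separates graphs: $P(\mathbf{X}\mid M^*,G^*)\neq P(\mathbf{X}\mid M_{G'},G')$. It does not make $M^*$ the only mechanism reproducing the data under $G^*$. Hence the posterior over $M$ need not collapse to a point, and the Jensen gap need not vanish for that reason.

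\paragraph{The missing idea.} You need to link the decoder to the \emph{graph posterior}, not to the likelihood. Treat $M$ as an encoding that screens $G$ off from $\mathbf{X}$. Then $\mathbb{E}_{P(M\mid\mathbf{X})}[P(G\mid M)]=P(G\mid\mathbf{X})$, and Jensen gives $\mathcal{J}(G):=\mathbb{E}_{P(M\mid\mathbf{X}^{(N)})}[\log P(G\mid M)]\le\log P(G\mid\mathbf{X}^{(N)})$.

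The paper's proof then proceeds as follows:
\begin{itemize}
\item Theorem~\ref{thm:marginal_identifiability} with the uniform prior gives $P(G'\mid\mathbf{X}^{(N)})\to 0$, exponentially in $N$, for every $G'\neq G^*$.
\item The decomposition $P(M\mid\mathbf{X})=\sum_G P(M\mid G,\mathbf{X})P(G\mid\mathbf{X})\to P(M\mid G^*,\mathbf{X})$ justifies replacing $Q^*$ by $P(M\mid\mathbf{X})$.
\item The paper then argues that the bound is asymptotically tight at $G^*$.
\end{itemize}
This route only needs the posterior over graphs to concentrate, which is exactly what Theorem~\ref{thm:marginal_identifiability} delivers. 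It never needs the mechanism posterior to become a point mass.

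Your Step 2 intuition, that supervised pretraining makes the decoder a Bayes posterior over $G$, is compatible with this. As you noticed, though, it only holds as a log score once $Q$ is degenerate. The clean fix is to drop the appeal to Theorem~\ref{thm:kl_known_mechanism} and use the identity above instead.
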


\begin{figure}[t]
    \centering
    \includegraphics[width = 0.5\textwidth]{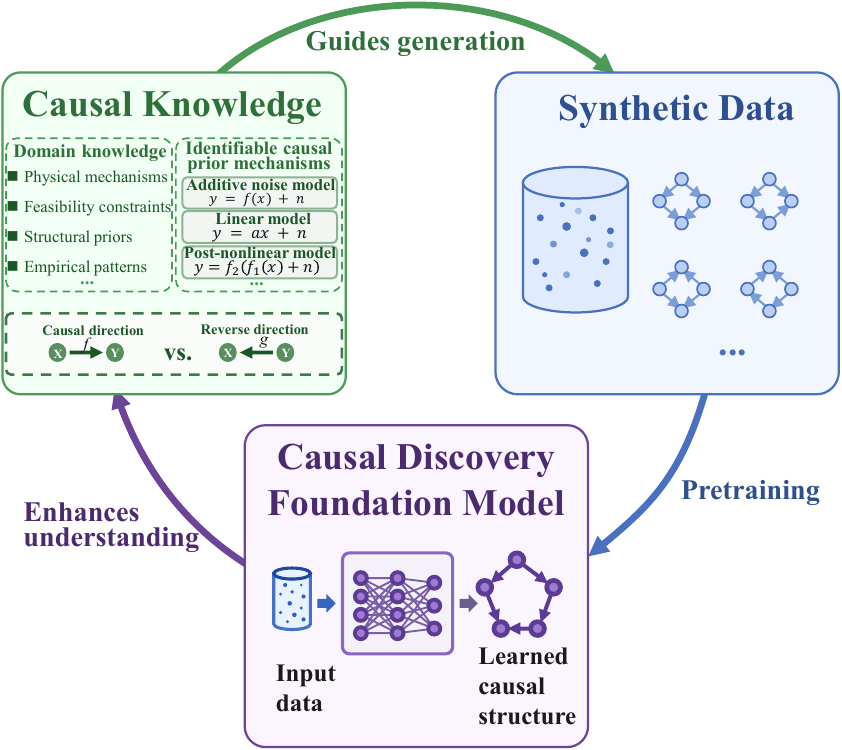}
    \caption{The foundation model paradigm for causal discovery. Causal knowledge guides the generation of diverse synthetic data. Pre-training on this data allows the foundation model to internalize causal asymmetries. Zero-shot inference on real-world data enhances our understanding of unknown mechanisms, which in turn helps us refine and expand the causal knowledge, ultimately driving towards a universally generalizable model.}
    \label{fig:framework}
\end{figure}

By establishing that our architecture is consistent with constraints via the variational framework, we formalize a fundamental paradigm shift in causal discovery, as illustrated in Figure \ref{fig:framework}. In traditional frameworks, theoretical identifiability merely served as a prerequisite for the specified causal mechanisms. In CDFM, however, Corollary \ref{cor:elbo_identifiability} shifts the core bottleneck from manual algorithm selection to the scalable generation of diverse, identifiable synthetic data. 

This forms an iterative, self-reinforcing closed loop. Existing causal knowledge guides the generation of diverse synthetic datasets. Pre-training on this vast space allows the model's parameterized distributions to internalize complex causal asymmetries. Subsequently, zero-shot inference on real-world data evaluates the model's structural predictions. Insights gained from its performance reveal previously unknown causal mechanisms. This feedback empowers us to refine our theoretical understanding of identifiability, expand the underlying causal knowledge base, and generate even richer synthetic data, ultimately driving the foundation model toward a universally generalizable causal discovery engine.

\begin{figure}[t]
    \centering
    \includegraphics[width=\textwidth]{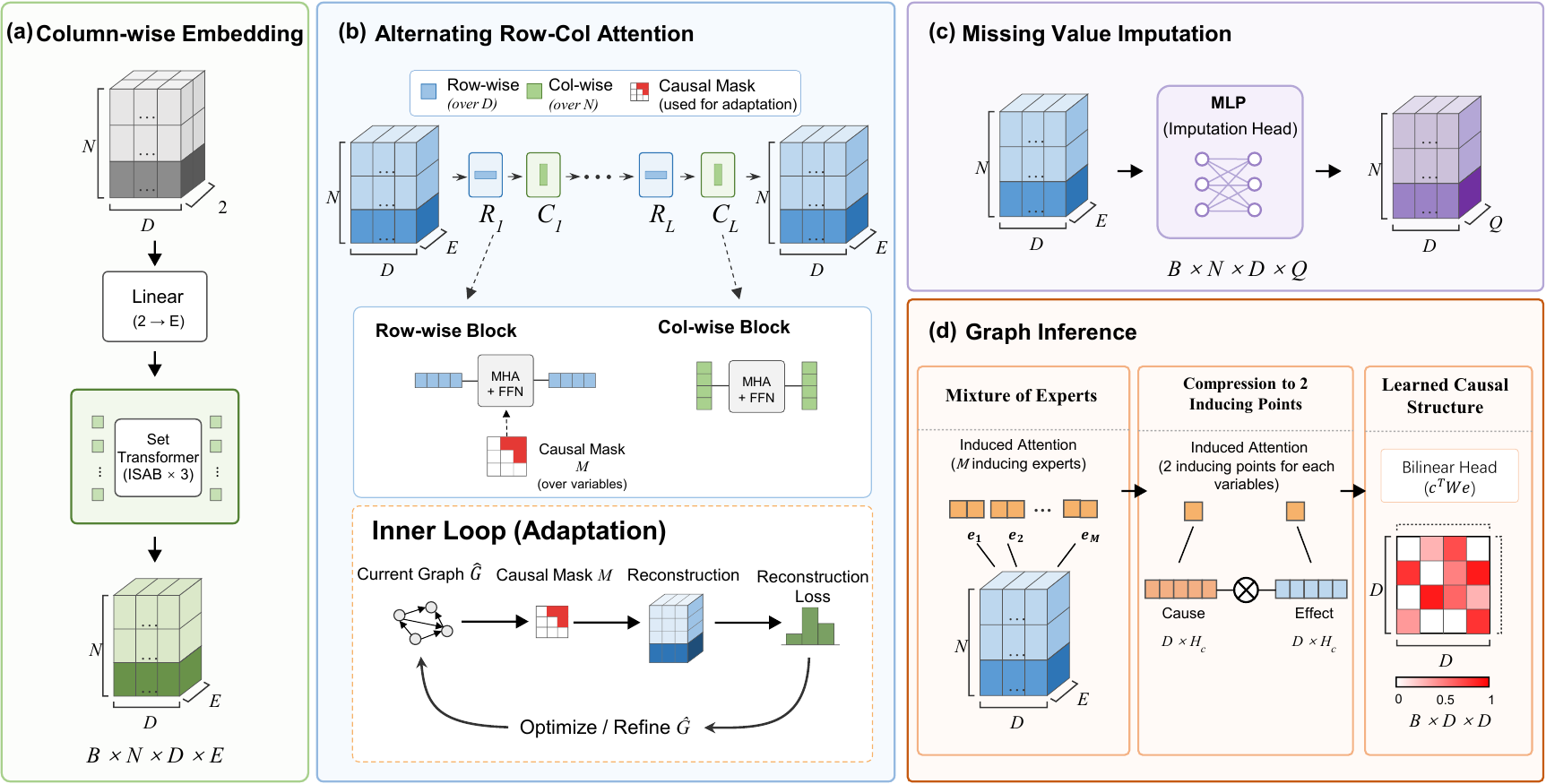}
    \caption{
    The overall architecture of CDFM.
    }
    \label{fig:gcd_fm_architecture}
\end{figure}

\section{Causal Discovery Foundation Model}

CDFM is designed to instantiate the intractable marginalization over the mechanism space $\mathcal{M}$ through a deep neural architecture. It takes an observation matrix $\mathbf{X} \in \mathbb{R}^{N \times D}$ (consisting of $N$ observations over $D$ variables, with potential missing entries) and infers the underlying causal graph $G \in \{0,1\}^{D \times D}$. As formalized in Section~\ref{sec: background preliminaries}, the graph would be an adjacency matrix augmented with bidirected edges when latent confounding is present.

The architectural design of CDFM explicitly mirrors the three components of the variational ELBO: mechanism inference (Section \ref{sec:Mechanism Inference}), data reconstruction (Section \ref{sec:Data Reconstruction}), and graph inference (Section \ref{sec:Graph Inference}).

The overview of the architecture of CDFM, as illustrated in Figure~\ref{fig:gcd_fm_architecture}, mainly consists of three distinct components:
\begin{enumerate}
    \item \textbf{Column-Wise Embedding and Alternating Attention (Mechanism Inference):} Each scalar is first embedded into an $E$-dimensional vector via a 2-channel, distribution-aware Set Transformer to capture marginal footprints and missing-value patterns. The embedded tensor is then processed by a stack of $2L$ attention blocks that alternate between the sample and variable axes, integrating joint dependencies to parameterize the latent mechanism distribution $Q(M|\mathbf{X})$.
    \item \textbf{Missing Value Imputation (Data Reconstruction):} A residual block predicts $Q$ conditional quantile values for each masked position, serving as a self-supervised task to satisfy the data reconstruction term $P(\mathbf{X}|G,M)$.
    \item \textbf{Extraction and Graph Inference:} Induced self-attention extracts cause and effect summary vectors, from which a Mixture-of-Experts bilinear head produces the final edge logits, which fulfill the structural decoding term $P(G|M)$.
\end{enumerate}

\subsection{Mechanism Inference: Column-Wise Embedding and Alternating Attention}\label{sec:Mechanism Inference}

To build the variational distribution $Q(M|\mathbf{X})$, CDFM first extracts high-order statistics of each feature from the data by transforming each scalar cell into a dense vector representation. To achieve this while explicitly supporting incomplete real-world datasets and paving the way for the subsequent self-supervised data reconstruction task, CDFM begins with a column-wise embedding module based on a two-channel input encoding. Specifically, each observed scalar value $x_{ij}$ is paired with a binary missingness indicator $r_{ij} \in \{0, 1\}$, forming a two-dimensional vector $[x_{ij}, r_{ij}]$. For a missing entry, $x_{ij}$ is replaced by zero after column-wise standardization, while $r_{ij}=1$ explicitly marks its missingness. A linear projection maps this pair into the initial embedding space: 
\begin{equation}
    u_{ij} = \mathrm{Linear}_{\mathrm{in}}([x_{ij},\, r_{ij}]) \quad \in \mathbb{R}^{E}.
\end{equation}
This missingness channel allows the model to distinguish the missing entries, aiding downstream imputation by marking which positions require reconstruction.

Subsequently, inspired by \citep{qu2025tabicl}, to encompass the high-order statistics of marginal distributions, we employ a shared Set Transformer~\citep{lee2019set} that treats each column as an \textit{unordered set of cell values}. For each column $c_j = \{x_{1j}, \dots, x_{Nj}\}$, the projected embeddings are processed by a set transformer composed of $L_{\mathrm{col}} = 3$ bidirectional induced self-attention blocks (ISAB)~\citep{lee2019set}. Each ISAB block operates as follows:
\begin{equation}
    \begin{aligned}
        H &= \mathrm{MAB}_1(V_I,\, U,\, U) \quad \in \mathbb{R}^{K \times E}, \\
        V &= \mathrm{MAB}_2(U,\, H,\, H) \quad \in \mathbb{R}^{N \times E},
    \end{aligned}
\end{equation}
where MAB denotes a multi-head attention block, $V_I \in \mathbb{R}^{K \times E}$ are $K$ learnable inducing vectors, and $U$ refers to the embedded samples within the column. After the ISAB blocks, two independent linear projections generate per-sample affine transformation parameters:
\begin{equation}
   W_j=\mathrm{Linear}_W(V),
\qquad
B_j=\mathrm{Linear}_B(V),
\end{equation}
and the final column embedding is obtained via an element-wise affine transform applied only to the value channel:
\begin{equation}
    e_{ij} = W_{ij} \odot x_{ij} + B_{ij} \quad \in \mathbb{R}^{E}.
\end{equation}
This affine gating mechanism inspects the empirical distribution of the entire column and generates per-cell weights and biases that encode each value's position relative to the column's distribution. Empirically, this causes columns with similar distributional properties to cluster in embedding space, effectively capturing the structural footprints of the mechanisms.

Building upon these marginal footprints extracted from individual columns, CDFM proceeds to capture the joint dependencies across variables to fully parameterize the variational distribution $Q(M|\mathbf{X})$. This is achieved through a sequence of alternating row- and column-wise attention operations that progressively integrate information across both axes of the data, as illustrated in Figure \ref{fig:gcd_fm_architecture}. Each attention step employs a multi-head attention block followed by a feedforward network with GELU activation, synthesizing complex variable-sample interactions into a comprehensive latent mechanism state.

\subsection{Data Reconstruction: Causal Masking and Imputation}\label{sec:Data Reconstruction}

To instantiate the data reconstruction term $\log P(\mathbf{X} \mid G, M)$, CDFM employs a self-supervised missing value imputation task guided by a soft structural masking mechanism.

During the forward pass, even-indexed attention blocks incorporate a soft causal mask $\mathbf{S} \in \mathbb{R}^{D \times D}$ that dictates the information flow between variables. Rather than applying a rigid, non-differentiable penalty, CDFM employs a continuous relaxation parameterized by the continuous edge weights $W$ inferred from the current graph hypothesis. When variable $i$ attends to variable $j$ ($i \neq j$), the attention logit is penalized based on the strength of the directed edge $j \rightarrow i$:
\begin{equation}
    S_{ij} = -\beta \cdot \sigma\left( - \frac{W_{ji} - \tau}{T} \right)
\end{equation}
where $\sigma$ is the sigmoid function, $\tau$ is a decision threshold, $T$ is a temperature parameter controlling the steepness of the transition, and $\beta$ is a scaling factor determining the maximum penalty. For self-attention, the diagonal remains unpenalized ($S_{ii} = 0$).

This soft inductive bias ensures that if $j$ is not a likely cause of $i$ ($W_{ji} \ll \tau$), the attention mechanism heavily restricts information flow from $j$ to $i$. Conversely, strong causal edges allow unhindered attention. This formulation encourages the model to evaluate the target variable as a function primarily of its direct causes.

Following the final alternating block, the model gathers the contextualized embeddings $z^{(2L)}$ at positions originally marked as missing ($r_{ij} = 1$) and processes them through a residual block to predict quantile levels $\hat{y}_{ij} \in \mathbb{R}^{Q}$. By evaluating the discrepancy between these predictions and the masked true values, the imputation task provides a continuous gradient signal for structure learning.

Crucially, because the mask $\mathbf{S}$ is soft and differentiable with respect to the edge weights $W$, it seamlessly facilitates an unsupervised inner loop during inference. In this adaptation phase, the model iteratively updates the continuous causal graph hypothesis by minimizing the imputation error. This inner loop allows CDFM to refine its structure predictions on novel datasets without relying on ground-truth causal labels, aligning with the principle that an accurate causal graph should yield minimal observational reconstruction error.

\subsection{Graph Inference}\label{sec:Graph Inference}

The final architectural component realizes the graph inference term $\log P(G \mid M)$, decoding the causal structure from the neurally encoded mechanism representations.

To produce a causal graph prediction that remains invariant to the sample size $N$, CDFM aggregates the sample-wise contextual representations into variable-level structural summaries. To learn causal structure from a diverse mechanism space $\mathcal{M}$, CDFM first employs an induced self-attention block with $2K$ learnable inducing vectors as $K$ experts, aiming to extract the causal mechanism Mixture-of-Experts (MoE) extraction process.

Subsequently, these intermediate expert representations are routed through an attention-based aggregator, condensing the diverse expert outputs down to two global summary vectors for each variable $i$: a cause representation $\mathbf{e}_i^{\mathrm{c}}$ and an effect representation $\mathbf{e}_i^{\mathrm{e}}$. This attention-driven compression effectively acts as a learned soft-routing mechanism, adaptively attending to the most relevant expert evidence based on the active underlying data-generating process.

These representations are evaluated to obtain the respective cause and effect vectors, yielding $\mathbf{u}_i = \mathbf{W}_c \mathbf{e}_i^{\mathrm{c}}$ and $\mathbf{v}_j = \mathbf{W}_e \mathbf{e}_j^{\mathrm{e}}$, where $\mathbf{u}_i, \mathbf{v}_j \in \mathbb{R}^p$. The continuous edge logit is then computed using a temperature-scaled dot product augmented by a global sparsity bias $A_{ij} = \exp(\tau) \cdot \frac{\mathbf{u}_i^{\mathsf{T}} \mathbf{v}_j}{\sqrt{p}} + b_0$, where $p$ is the projection dimension, $\tau$ is a learned log-temperature parameter governing the sharpness of the edge probabilities, and $b_0$ is a learned global bias representing the baseline sparsity prior of the causal graphs. The diagonal is strictly masked to $-\infty$ to explicitly preclude causal self-loops. The inherent asymmetry of this bilinear evaluation ($\mathbf{u}_i^{\mathsf{T}} \mathbf{v}_j \neq \mathbf{u}_j^{\mathsf{T}} \mathbf{v}_i$) naturally accommodates both directed causal relationships and bidirected edges.

Moreover, to convert the continuous edge probabilities into a binary graph, we employ a lightweight graph-adaptive threshold calibrator fitted on an independent synthetic calibration set. The calibrator uses ten graph-level features, including the mean and standard deviation, distributional quantiles, the logarithm of the number of variables, the median and maximum gaps between consecutively sorted probabilities, and a Kneedle-inspired knee statistic describing the separation between high-confidence edges and the low-probability tail~\citep{satopaa2011finding}. The graph-specific threshold is predicted by a logit-linear model whose coefficients are learned once from the oracle thresholds of held-out synthetic graphs and remain fixed during evaluation.

\subsection{Causal Prior Construction and Pretraining}
\label{sec:pretraining_prior}

The generalization capability of CDFM fundamentally depends on the diversity and identifiability of its pretraining data. We construct a massive-scale synthetic data generator that produces $(\mathbf{X}, G)$ pairs on the fly, where each training instance is sampled from an independently randomized causal model. The generator spans a broad spectrum of causal scenarios through the following dimensions:

\paragraph{Graph structure diversity.}
Causal graphs are sampled from six distinct random graph models to cover a wide range of topological patterns: Erd\H{o}s--R\'enyi (ER), scale-free (SF) and transposed scale-free (SFT), Watts--Strogatz small-world (WS), stochastic block model (SBM), and geometric random graph (GRG). The expected number of edges per variable is uniformly sampled from $\{1, 2, 3\}$, yielding graphs that range from sparse trees to moderately dense DAGs. The number of observed variables and sample size are drawn uniformly from $[2, 100]$ and $[50,4096]$, respectively.

\paragraph{Mechanism type diversity.}
The core of our data generator is a broad taxonomy of structural causal mechanisms, each encoding distinct functional-form and distribution. We categorize these mechanisms into several families:

\textit{Continuous additive mechanisms} include linear structural equation models (e.g., $X_j = \sum_{i \in \mathrm{PA}_j} w_{ij} X_i + N_j$) with both homoscedastic and heteroscedastic noise~\citep{yinEffectiveCausalDiscovery2024}, random Fourier feature (RFF) models that introduce highly nonlinear dependencies; post-nonlinear causal models~\citep{zhangIdentifiabilityPostNonlinear2009} where a nonlinear distortion is applied after additive mixing ($X_j = g_2(g_1(\mathbf{PA}_j) + N_j)$), causal additive models~\citep{buhlmannCAMCausalAdditive2014} where each parent contributes through an independent univariate nonlinearity, and physics-inspired mechanisms.

\textit{Discrete and mixed-type mechanisms} target the prevalent non-continuous data in real-world applications. These include conditional probability tables for purely categorical variables; discrete additive noise models~\citep{petersIdentifyingCauseEffectDiscrete2010} for ordinal discrete data; and ordinal causal discovery formulations~\citep{niOrdinalCausalDiscovery2022} that bridge continuous and discrete regimes within a single mechanism.

\textit{Post-processing transformations} simulate measurement and recording imperfections. Independent of the base SCM, each batch may undergo additive measurement error (perturbing a subset of variables with independent Gaussian noise) or discretization (rounding a subset of child variables to coarse bins). These transformations can co-occur within the same batch, producing data that reflects the messy, multi-step nature of real-world measurement.

\paragraph{Noise and edge-weight diversity.}
To ensure that CDFM can disentangle causal signal from exogenous variation, we vary both the noise distribution and the signal-to-noise ratio per mechanism instance. The exogenous noise $N_j$ for each variable is drawn from one of several different distribution families, such as Gaussian, Laplace, Cauchy, Beta, Exponential, Student's-$t$, log-normal, and Gaussian mixtures.

\paragraph{Latent confounding.}
Causal sufficiency rarely holds in practice. To equip CDFM with the ability to reason under unobserved confounding, $15\%$ of training batches include latent variables. The generator selects $d_{\mathrm{latent}}$ non-leaf nodes as latent, and then generates the full SCM, and finally drops the latent columns. In the resulting ground-truth graph, observed variable pairs that share a latent parent receive bidirected edges $X_i \leftrightarrow X_j$, while direct causal links retain directed edges $X_i \rightarrow X_j$. When a pair exhibits both a direct causal effect and a shared latent confounder, the directed edge takes priority. Acyclic constraint is disabled for these batches since bidirected edges inherently form cycles in the adjacency matrix.

\paragraph{Missing values.}
We further introduce random missingness to provide the auxiliary imputation task with a training signal and to prepare the model for incomplete real-world datasets. With $50\%$ probability, a batch undergoes masking where some portion of the entries are missing completely at random (MCAR). The original values are retained exclusively for computing the quantile imputation loss, while the model receives only the masked matrix with an explicit binary mask channel.

\subsection{Training Objective}

CDFM is trained end-to-end using a joint objective function that encompasses standard classification supervision, margin-based regularization, sparsity constraints, and self-supervised reconstruction terms:
\begin{equation}
    \mathcal{L}_{\mathrm{total}} = \mathcal{L}_{\mathrm{BCE}} + \alpha_{\mathrm{margin}} \cdot \mathcal{L}_{\mathrm{margin}} + \gamma_{\mathrm{F_1}} \cdot \mathcal{L}_{\mathrm{soft-F_1}} + \beta_{\mathrm{acyc}} \cdot \mathcal{L}_{\mathrm{acyc}} + \alpha_{\mathrm{impute}} \cdot \mathcal{L}_{\mathrm{quantile}},
\end{equation}
where $\alpha_{\mathrm{margin}}$, $\gamma_{\mathrm{F_1}}$, $\beta_{\mathrm{acyc}}$, and $\alpha_{\mathrm{impute}}$ balance the respective loss components.

\paragraph{Structural supervision.} 
To ensure reliable numerical convergence and sharp directional confidence, structural supervision combines standard binary cross-entropy ($\mathcal{L}_{\mathrm{BCE}}$) and a margin-based variant ($\mathcal{L}_{\mathrm{margin}}$). The BCE loss establishes baseline probabilistic alignment:
\begin{equation}
    \mathcal{L}_{\mathrm{BCE}} = -\frac{1}{|\mathcal{E}_{\mathrm{valid}}|} \sum_{(i,j) \in \mathcal{E}_{\mathrm{valid}}} \Big[ w_{\mathrm{pos}} \cdot G_{ij} \log \sigma(W_{ij}) + (1 - G_{ij}) \log(1 - \sigma(W_{ij})) \Big],
\end{equation}
where $w_{\mathrm{pos}}$ is a positive-class weight adjusted for graph sparsity. The margin-based component acts as a structural regularizer that explicitly penalizes ambiguous predictions near the classification threshold, utilizing a strict margin parameter $m$:
\begin{equation}
    \mathcal{L}_{\mathrm{margin}} = -\frac{1}{|\mathcal{E}_{\mathrm{valid}}|} \sum_{(i,j) \in \mathcal{E}_{\mathrm{valid}}} \Big[ w_{\mathrm{pos}} \cdot G_{ij} \log \sigma(W_{ij} - m) + (1 - G_{ij}) \log(1 - \sigma(W_{ij} + m)) \Big],
\end{equation}

\paragraph{Soft $F_1$ loss.} 
Because causal adjacency matrices are inherently sparse, standard cross-entropy objectives frequently converge to trivial sparse solutions. We incorporate a differentiable relaxation of the $F_1$-score:
\begin{equation}
    \mathcal{L}_{\mathrm{soft-F_1}} = 1 - \frac{2 \cdot \mathrm{TP}_{\mathrm{soft}}}{2 \cdot \mathrm{TP}_{\mathrm{soft}} + \mathrm{FP}_{\mathrm{soft}} + \mathrm{FN}_{\mathrm{soft}}},
\end{equation}
where the soft true positives ($\mathrm{TP}_{\mathrm{soft}}$), false positives ($\mathrm{FP}_{\mathrm{soft}}$), and false negatives ($\mathrm{FN}_{\mathrm{soft}}$) are evaluated continuously over the predicted sigmoid probabilities $P = \sigma(W)$ and the ground-truth matrix $G$:
\begin{equation}
    \mathrm{TP}_{\mathrm{soft}} = \sum_{i \neq j} P_{ij} G_{ij}, \quad \mathrm{FP}_{\mathrm{soft}} = \sum_{i \neq j} P_{ij} (1 - G_{ij}), \quad \mathrm{FN}_{\mathrm{soft}} = \sum_{i \neq j} (1 - P_{ij}) G_{ij}.
\end{equation}

\paragraph{Acyclicity constraint.} 
To enforce the structural validity of directed graphs, CDFM integrates a spectral radius acyclicity penalty $\mathcal{L}_{\mathrm{acyc}} = \max(0, \rho(\sigma(W)) - \delta)$, estimated dynamically via power iteration. This constraint is systematically omitted for batches containing latent confounding, where bidirected edges naturally permit cyclic representations in the observed mixed graph.

\paragraph{Quantile imputation loss.} 
Corresponding with the data reconstruction term $\log P(\mathbf{X} \mid G, M)$ defined within the variational lower bound, a pinball loss is evaluated at a set of $Q$ designated quantile levels (denoted by $q_k \in (0, 1)$ for $k=1, \dots, Q$) across the set of missing entries $\mathcal{R}$:
\begin{equation}
    \mathcal{L}_{\mathrm{quantile}} = \frac{1}{|\mathcal{R}| \cdot Q} \sum_{(i,j) \in \mathcal{R}} \sum_{k=1}^{Q} 2 \cdot (x_{ij}^{\mathrm{true}} - \hat{y}_{ij}^{q_k}) \cdot \big(q_k - \mathbf{1}[x_{ij}^{\mathrm{true}} \leq \hat{y}_{ij}^{q_k}]\big).
\end{equation}

\section{Experiments}\label{sec:experiments}

We organize the experiments around three central questions. First, can a single pretrained CDFM model recover causal structures across heterogeneous structural causal models, graph scales, and sample regimes without dataset-specific optimization? Second, can the causal representations learned from synthetic SCMs transfer to physical and empirical systems in the real-world? Third, does CDFM exhibit behavior consistent with established boundaries of causal identifiability under controlled conditions? We additionally evaluate the missing-value reconstruction head as an auxiliary capability arising from the structure-aware training objective.

\subsection{Experimental Setup}
\label{sec:experimental_setup}

For CDFM, all benchmark experiments are conducted in a zero-shot setting. The model is directly applied to each test dataset without access to its ground-truth graph and without dataset-specific parameter optimization, and a frozen graph-adaptive calibration is used for the decision boundary.

The controlled identifiability studies in Section~\ref{sec:exp_identifiability} follow a different protocol. For these diagnostic experiments, separately adapted copies of CDFM are trained on deliberately constructed low-dimensional SCM families. These experiments are intended to examine whether the model architecture can learn decision boundaries consistent with known identifiability results; they are not included in the zero-shot benchmark claims.

\paragraph{Baselines.}
We compare CDFM with representative causal discovery approaches covering different methodological paradigms. These include amortized causal discovery models (AVICI \cite{lorchAmortizedInferenceCausal2022}, TabCausal \cite{li2026tabcausal}, Arrow \cite{thompson2026arrow}), constraint-based methods (PC \cite{spirtesCausationPredictionSearch2000}), score-based methods (GES \cite{chickeringOptimalStructureIdentification2002}), functional causal models (DirectLiNGAM \cite{shimizuDirectLINGAMDirectMethod2011}), and continuous optimization methods (NOTEARS \cite{zhengDAGsNOTEARSContinuous2018}). All baselines are evaluated using their standard implementations with recommended hyperparameters.

\paragraph{Metrics.}
For multivariate causal discovery, we evaluate edge ranking using AUROC and thresholded graph recovery using precision, recall, $F_1$ score. Results are averaged over independently generated datasets.
For bivariate causal direction inference, we report weighted accuracy following the official evaluation protocol of the T\"ubingen benchmark.

\subsection{Synthetic Benchmark}
\label{sec:synthetic_benchmark}

The synthetic evaluation provides the primary controlled test of whether one frozen model can operate across a broad range of causal data-generating processes. 

\paragraph{Benchmark design.}
We construct a comprehensive synthetic evaluation suite spanning 15 distinct structural causal mechanism families (see Appendix~\ref{app:mechanism_names} for detailed descriptions). These families encompass linear and nonlinear structural equations, homoscedastic and heteroscedastic noise regimes, continuous and discrete variable types, post-nonlinear transformations, measurement error perturbations, ordinal observations, rounded discretization, and time-lagged causal mechanisms.

The benchmark systematically varies graph size across $D \in \{10, 15, 20, 25, 30, 40, 50, 70, 100\}$ variables and sample size across $N \in \{500, 1000, 2000, 3000, 4000\}$ observations. For each $(D, N, \text{mechanism})$ configuration, multiple independently sampled datasets are generated using held-out random seeds.

Tables~\ref{tab:synthetic_by_sample_auc} and~\ref{tab:synthetic_by_sample_f1} summarize the performance of all methods aggregated over all 15 mechanism families and all graph sizes, grouped by sample size. CDFM outperforms all baselines across every sample regime in both AUROC and $F_1$.

This behavior indicates that CDFM can effectively utilize additional observational information during inference rather than relying on fixed-size statistical heuristics. In contrast to classical methods that often require manually selected assumptions about the underlying mechanism class, CDFM maintains stable performance across different sample regimes.

\begin{table}[t]
\centering
\caption{Synthetic benchmark AUROC grouped by sample size. Results averaged over 15 mechanism families and ten graph sizes ($D \in \{10,15,20,25,30,40,50,70,100\}$).}
\label{tab:synthetic_by_sample_auc}
\begin{tabular}{l c c c c c}
\toprule
Method & 500 samples & 1000 samples & 2000 samples & 3000 samples & 4000 samples \\
\midrule
CDFM & \textbf{0.864 $\pm$ 0.109} & \textbf{0.878 $\pm$ 0.104} & \textbf{0.885 $\pm$ 0.101} & \textbf{0.887 $\pm$ 0.100} & \textbf{0.888 $\pm$ 0.101} \\
TabCausal & 0.767 $\pm$ 0.166 & 0.774 $\pm$ 0.166 & 0.782 $\pm$ 0.167 & 0.786 $\pm$ 0.164 & 0.786 $\pm$ 0.166 \\
Arrow & 0.676 $\pm$ 0.153 & 0.669 $\pm$ 0.149 & 0.680 $\pm$ 0.154 & 0.682 $\pm$ 0.154 & 0.685 $\pm$ 0.154 \\
AVICI & 0.756 $\pm$ 0.163 & 0.760 $\pm$ 0.165 & 0.764 $\pm$ 0.164 & 0.761 $\pm$ 0.161 & 0.760 $\pm$ 0.163 \\
DirectLiNGAM & 0.676 $\pm$ 0.150 & 0.689 $\pm$ 0.154 & 0.713 $\pm$ 0.158 & 0.724 $\pm$ 0.162 & 0.731 $\pm$ 0.164 \\
GES & 0.649 $\pm$ 0.105 & 0.657 $\pm$ 0.108 & 0.663 $\pm$ 0.110 & 0.667 $\pm$ 0.110 & 0.667 $\pm$ 0.112 \\
PC & 0.700 $\pm$ 0.122 & 0.715 $\pm$ 0.127 & 0.728 $\pm$ 0.129 & 0.737 $\pm$ 0.132 & 0.740 $\pm$ 0.133 \\
NOTEARS & 0.563 $\pm$ 0.127 & 0.581 $\pm$ 0.124 & 0.574 $\pm$ 0.151 & 0.583 $\pm$ 0.157 & 0.585 $\pm$ 0.167 \\
\bottomrule
\end{tabular}
\end{table}

\begin{table}[t]
\centering
\caption{Synthetic benchmark $F_1$ score grouped by sample size. Results averaged over 15 mechanism families and ten graph sizes ($D \in \{10,15,20,25,30,40,50,70,100\}$). Bold indicates best.}
\label{tab:synthetic_by_sample_f1}
\begin{tabular}{l c c c c c}
\toprule
Method & 500 samples & 1000 samples & 2000 samples & 3000 samples & 4000 samples \\
\midrule
CDFM & \textbf{0.552 $\pm$ 0.200} & \textbf{0.578 $\pm$ 0.197} & \textbf{0.587 $\pm$ 0.201} & \textbf{0.591 $\pm$ 0.202} & \textbf{0.591 $\pm$ 0.201} \\
TabCausal & 0.378 $\pm$ 0.246 & 0.396 $\pm$ 0.250 & 0.403 $\pm$ 0.258 & 0.405 $\pm$ 0.258 & 0.407 $\pm$ 0.260 \\
Arrow & 0.202 $\pm$ 0.155 & 0.196 $\pm$ 0.149 & 0.191 $\pm$ 0.155 & 0.188 $\pm$ 0.152 & 0.187 $\pm$ 0.153 \\
AVICI & 0.354 $\pm$ 0.260 & 0.360 $\pm$ 0.263 & 0.357 $\pm$ 0.266 & 0.338 $\pm$ 0.257 & 0.338 $\pm$ 0.260 \\
DirectLiNGAM & 0.381 $\pm$ 0.233 & 0.404 $\pm$ 0.244 & 0.405 $\pm$ 0.230 & 0.401 $\pm$ 0.216 & 0.392 $\pm$ 0.207 \\
GES & 0.370 $\pm$ 0.195 & 0.383 $\pm$ 0.196 & 0.389 $\pm$ 0.196 & 0.394 $\pm$ 0.195 & 0.395 $\pm$ 0.196 \\
PC & 0.471 $\pm$ 0.197 & 0.493 $\pm$ 0.196 & 0.513 $\pm$ 0.196 & 0.522 $\pm$ 0.195 & 0.526 $\pm$ 0.195 \\
NOTEARS & 0.138 $\pm$ 0.113 & 0.139 $\pm$ 0.110 & 0.141 $\pm$ 0.115 & 0.126 $\pm$ 0.110 & 0.126 $\pm$ 0.112 \\
\bottomrule
\end{tabular}
\end{table}

\begin{figure}[t]
    \centering
    \begin{subfigure}[t]{0.48\textwidth}
        \centering
        \includegraphics[width=\textwidth]{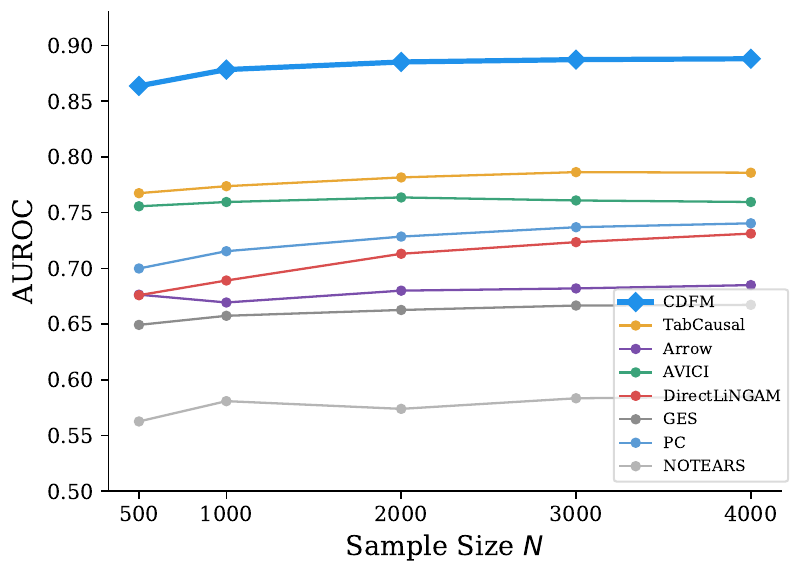}
        \caption{AUROC vs.\ sample size $N$.}
        \label{fig:scaling_a}
    \end{subfigure}
    \hfill
    \begin{subfigure}[t]{0.48\textwidth}
        \centering
        \includegraphics[width=\textwidth]{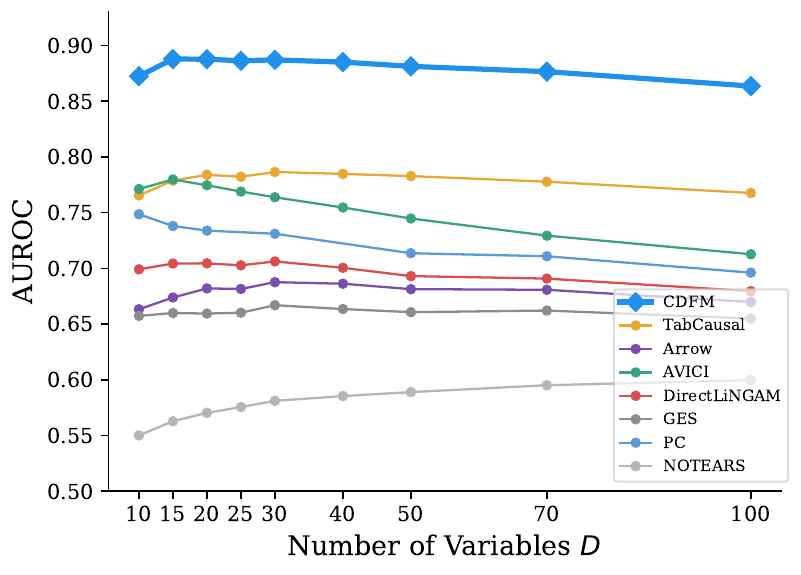}
        \caption{AUROC vs.\ number of variables $D$.}
        \label{fig:scaling_b}
    \end{subfigure}
    \caption{Performance over sample size and the number of variables. (a)~AUROC as a function of sample size $N$; (b)~AUROC as a function of the number of variables $D$. Results are averaged over all 15 mechanism families.}
    \label{fig:scaling}
\end{figure}

\begin{table}[t]
\centering
\caption{Synthetic robustness across 15 mechanism families. Results are averaged over nine graph sizes ($D \in \{10,15,20,25,30,40,50,70,100\}$) and five sample sizes ($N \in \{500,1000,2000,3000,4000\}$). Best baseline denotes the strongest non-CDFM method under each metric. Bold indicates best overall. Mechanism abbreviations are explained in Appendix~\ref{app:mechanism_names}.}
\label{tab:mechanism_robustness}
\resizebox{\textwidth}{!}{
\begin{tabular}{lcccccc}
\toprule
Mechanism
& CDFM AUROC$\uparrow$
& Best Baseline AUROC$\uparrow$
& AUROC baseline
& CDFM $F_1\uparrow$
& Best Baseline $F_1\uparrow$
& $F_1$ baseline \\
\midrule
CAM & \textbf{0.863 $\pm$ 0.085} & 0.794 $\pm$ 0.116 & TabCausal & \textbf{0.591 $\pm$ 0.156} & 0.508 $\pm$ 0.146 & PC \\
CPT & \textbf{0.767 $\pm$ 0.136} & 0.725 $\pm$ 0.122 & PC & 0.373 $\pm$ 0.212 & \textbf{0.468 $\pm$ 0.177} & PC \\
Discrete ANM & \textbf{0.881 $\pm$ 0.082} & 0.783 $\pm$ 0.094 & PC & \textbf{0.600 $\pm$ 0.171} & 0.598 $\pm$ 0.150 & PC \\
Linear & \textbf{0.939 $\pm$ 0.071} & 0.929 $\pm$ 0.089 & TabCausal & 0.668 $\pm$ 0.192 & \textbf{0.759 $\pm$ 0.190} & DirectLiNGAM \\
Linear-Hetero & \textbf{0.927 $\pm$ 0.074} & 0.904 $\pm$ 0.098 & TabCausal & 0.641 $\pm$ 0.187 & \textbf{0.729 $\pm$ 0.173} & DirectLiNGAM \\
Linear-Ordinal & \textbf{0.862 $\pm$ 0.113} & 0.714 $\pm$ 0.150 & PC & \textbf{0.553 $\pm$ 0.196} & 0.477 $\pm$ 0.208 & PC \\
Meas. Error & \textbf{0.882 $\pm$ 0.098} & 0.827 $\pm$ 0.128 & TabCausal & \textbf{0.591 $\pm$ 0.193} & 0.516 $\pm$ 0.191 & PC \\
Physical & \textbf{0.833 $\pm$ 0.097} & 0.763 $\pm$ 0.117 & TabCausal & \textbf{0.522 $\pm$ 0.187} & 0.446 $\pm$ 0.167 & PC \\
Physical-Hetero & \textbf{0.870 $\pm$ 0.081} & 0.740 $\pm$ 0.145 & TabCausal & \textbf{0.558 $\pm$ 0.173} & 0.442 $\pm$ 0.149 & PC \\
PNL & \textbf{0.901 $\pm$ 0.096} & 0.813 $\pm$ 0.133 & TabCausal & \textbf{0.589 $\pm$ 0.211} & 0.453 $\pm$ 0.218 & PC \\
RFF & \textbf{0.947 $\pm$ 0.054} & 0.918 $\pm$ 0.090 & TabCausal & \textbf{0.680 $\pm$ 0.178} & 0.647 $\pm$ 0.204 & TabCausal \\
RFF-Hetero & \textbf{0.938 $\pm$ 0.057} & 0.906 $\pm$ 0.096 & AVICI & \textbf{0.654 $\pm$ 0.178} & 0.647 $\pm$ 0.215 & AVICI \\
RFF-Ordinal & \textbf{0.860 $\pm$ 0.100} & 0.736 $\pm$ 0.135 & PC & \textbf{0.547 $\pm$ 0.188} & 0.512 $\pm$ 0.176 & PC \\
Rounded & \textbf{0.881 $\pm$ 0.096} & 0.787 $\pm$ 0.136 & TabCausal & \textbf{0.575 $\pm$ 0.193} & 0.456 $\pm$ 0.196 & PC \\
Time-Lag & \textbf{0.854 $\pm$ 0.115} & 0.757 $\pm$ 0.147 & TabCausal & \textbf{0.558 $\pm$ 0.191} & 0.463 $\pm$ 0.189 & PC \\
\bottomrule
\end{tabular}
}
\end{table}

\paragraph{Performance across sample sizes and graph sizes.} 
Figure~\ref{fig:scaling_a} visualizes these scaling trends across sample sizes from $N = 500$ to $N = 4000$, and CDFM consistently outperforms all baselines in both regimes, with its advantage widening as the problem scale increases.

Figure~\ref{fig:scaling_b} evaluates performance as the number of variables increases from $D = 10$ to $D=100$. All methods become less accurate as the structural search space grows, but CDFM degrades more gradually than the baselines. At $D=100$, CDFM retains an AUROC above $0.87$, whereas the strongest competing amortized model reaches approximately $0.75$.

The amortized methods (AVICI, TabCausal, Arrow) show similar scaling trends to CDFM but at consistently lower absolute performance levels. Classical constraint-based and score-based methods (PC, GES) exhibit steeper performance degradation due to the combinatorial explosion of conditional independence tests and score evaluations in high dimensions.

\paragraph{Performance across causal mechanisms.}
\begin{figure}[t]
    \centering
    \includegraphics[width = \textwidth]{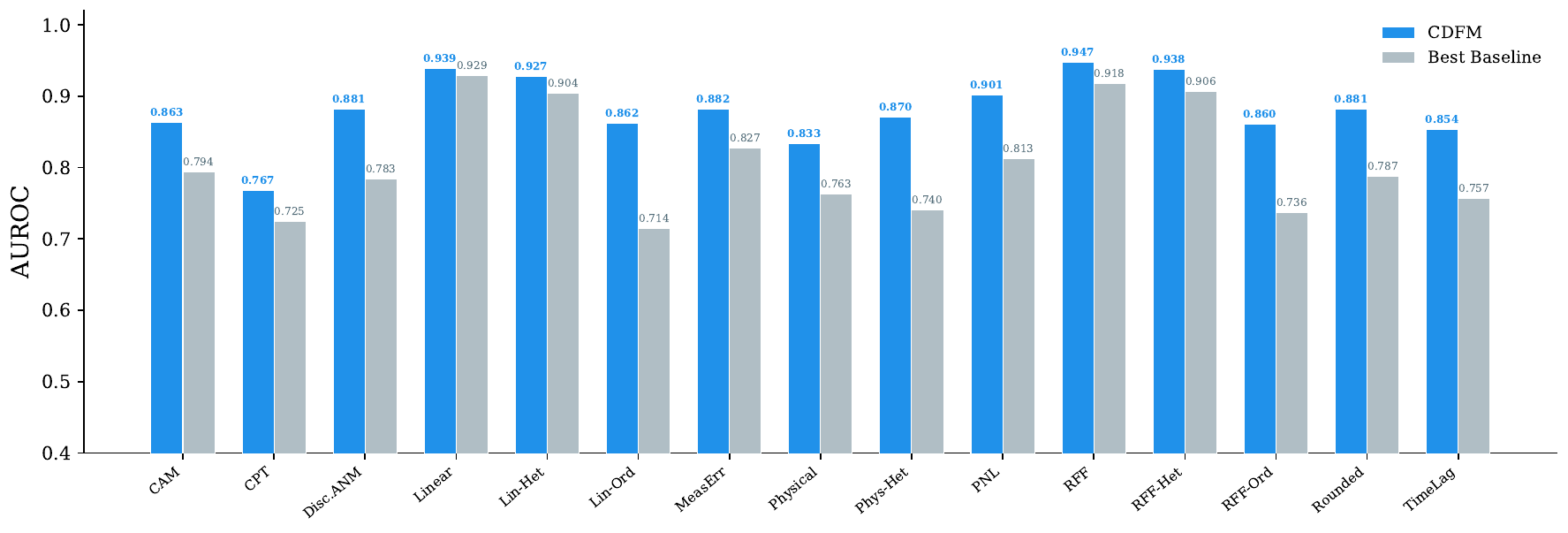}
    \caption{\textbf{Mechanism robustness.} AUROC comparison between CDFM and the best-performing baseline for each of the 15 mechanism families. CDFM wins on all 15 mechanisms in AUROC.}
    \label{fig:mechanism_robustness}
\end{figure}

\begin{figure}[t]
    \centering
    \includegraphics[width = \textwidth]{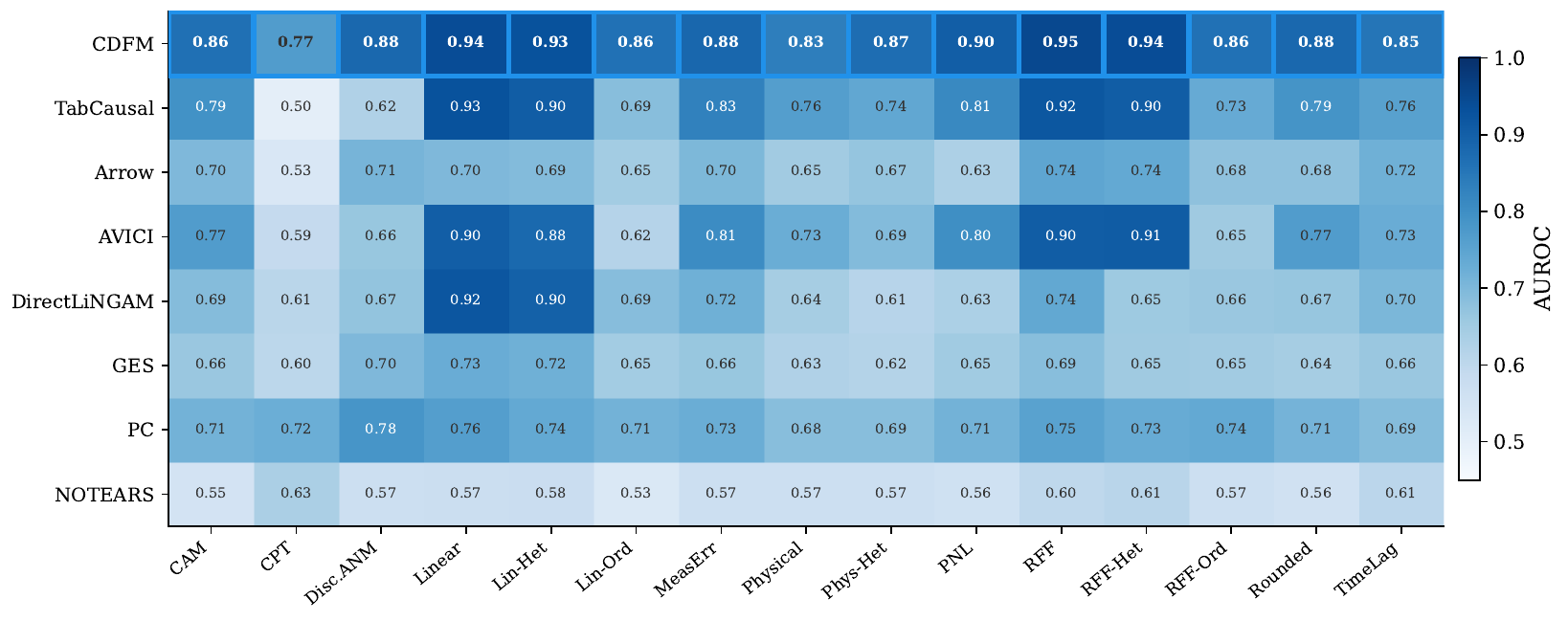}
    \caption{\textbf{Method $\times$ mechanism AUROC heatmap.} Rows correspond to methods, columns to mechanism families. CDFM (top row) achieves the highest AUROC across all 15 mechanisms.}
    \label{fig:mechanism_heatmap}
\end{figure}

To further examine whether the observed performance originates from broad mechanism coverage rather than a small subset of favorable cases, we evaluate performance separately for each mechanism family. Figure~\ref{fig:mechanism_robustness} provides a visual comparison of CDFM against the best baseline per mechanism. The mechanism heatmap in Figure~\ref{fig:mechanism_heatmap} further reveals that CDFM maintains a broad advantage across the evaluated causal mechanisms.

Specifically, as shown in Table~\ref{tab:mechanism_robustness}, CDFM achieves competitive or superior performance across most nonlinear, discrete, and corrupted observation settings. Specialized methods such as DirectLiNGAM remain advantageous in specific linear non-Gaussian scenarios, demonstrating the benefit of strong inductive assumptions when their assumptions are exactly satisfied. However, CDFM provides a more balanced performance profile across heterogeneous environments, suggesting that its advantage comes from mechanism-level generalization rather than optimization toward a single causal model class.

\subsection{Real-world Benchmark}
\label{sec:real_benchmark}

\begin{table}[t]
\centering
\caption{Causal Chamber Task A1 results for multivariate DAG recovery.}
\label{tab:causal_chamber_a1}
\begin{tabular}{lccccc}
\toprule
Method & AUROC$\uparrow$ & $F_1\uparrow$ & Precision$\uparrow$ & Recall$\uparrow$ & SHD$\downarrow$ \\
\midrule
CDFM  & \textbf{0.952} & \textbf{0.727} & \textbf{0.889} & 0.615 & \textbf{17}  \\
TabCausal    & 0.930 & 0.603 & 0.792 & 0.487 & 25 \\
DirectLiNGAM & 0.775 & 0.495 & 0.403 & \textbf{0.641} & 44 \\
GES          & 0.699 & 0.426 & 0.591 & 0.333 & 35  \\
PC   & 0.568 & 0.281 & 0.360 & 0.231 & 43  \\
Arrow      & 0.848 & 0.255 & 0.438 & 0.180 & 40 \\
NOTEARS      & 0.380 & 0.095 & 0.057 & 0.282 & 159 \\
AVICI        & 0.423 & 0.000 & 0.000 & 0.000 & 41 \\
\bottomrule
\end{tabular}
\end{table}

\begin{table}[t]
\centering
\caption{Causal Chamber Task A2 results. The predictions from 20 separately processed environments are averaged.}
\label{tab:causal_chamber_a2}
\begin{tabular}{lccccc}
\toprule
Method & AUROC$\uparrow$ & $F_1\uparrow$ & Precision$\uparrow$ & Recall$\uparrow$ & SHD$\downarrow$  \\
\midrule
CDFM  & \textbf{0.965} & \textbf{0.727} & 0.889 & 0.615 & \textbf{18}  \\
TabCausal    & 0.939 & 0.635 & 0.833 & 0.513 & 23  \\
GES          & 0.906 & 0.633 & \textbf{0.905} & 0.487 & 22  \\
DirectLiNGAM & 0.813 & 0.349 & 0.226 & \textbf{0.769} & 82  \\
PC   & 0.616 & 0.250 & 0.412 & 0.180 & 40  \\
Arrow      & 0.797 & 0.170 & 0.500 & 0.103 & 39  \\
NOTEARS      & 0.446 & 0.127 & 0.073 & 0.513 & 188  \\
AVICI        & 0.503 & 0.000 & 0.000 & 0.000 & 40  \\
\bottomrule
\end{tabular}
\end{table}

\begin{table}[t]
\centering
\caption{T\"ubingen results on 95 bivariate cause-effect pairs.}
\label{tab:tuebingen}
\begin{tabular}{lc}
\toprule
Method  & Accuracy$\uparrow$  \\
\midrule
CDFM  & \textbf{0.671}  \\
TabCausal  & 0.638  \\
Arrow  & 0.602  \\
AVICI  & 0.488  \\
DirectLiNGAM  & 0.508  \\
NOTEARS  & 0.401  \\

\bottomrule
\end{tabular}
\end{table}

We next examine whether the learned causal representation transfers beyond this synthetic evaluation environment. We consider a physical multivariate system from Causal Chamber and heterogeneous bivariate datasets from the T\"ubingen cause--effect benchmark.

\paragraph{Causal Chamber.}
Causal Chamber contains experimentally validated causal discovery tasks generated from automatically controlled physical systems. We evaluate CDFM on the light-tunnel data using Tasks A1 and A2.

For Task A1, we use the observational environment and infer the directed causal graph from measured variables. For Task A2, the dataset contains multiple intervention environments. Since CDFM receives a single observational matrix as input and does not explicitly model intervention targets, we introduce an environment-ensemble adaptation where each environment is processed independently, and predicted edge probabilities are averaged before evaluation. Table~\ref{tab:causal_chamber_a1} and Table~\ref{tab:causal_chamber_a2} summarize the results of Task A1 and Task A2, respectively.

CDFM achieves strong performance on both tasks, demonstrating that a model pretrained entirely on synthetic causal mechanisms can transfer to real physical systems and demonstrate robustness under diverse causal mechanisms.

\paragraph{T\"ubingen cause-effect pairs.}
We further evaluate CDFM on the T\"ubingen cause-effect pairs benchmark, which contains heterogeneous real-world bivariate datasets collected from multiple scientific domains. We use 95 causal pairs for strictly bivariate tasks.

For each pair, CDFM receives only two observed variables and predicts the causal direction.
Table~\ref{tab:tuebingen} shows that CDFM achieves competitive performance with existing state-of-the-art methods. The results demonstrate that the learned causal representation transfers beyond synthetic environments and remains effective across diverse real-world domains.

\subsection{Empirical Validation of Identifiability}\label{sec:exp_identifiability}

We next conduct controlled diagnostic experiments to examine whether CDFM can learn to examine whether CDFM behaves consistently with theoretical identifiability boundaries. If the training data spans identifiable causal mechanisms, the model should learn to distinguish causal directions where identifiability holds, and should produce near-uniform predictions where it does not. In this section, we empirically validate these claims through a series of controlled experiments. Moreover, to ensure CDFM has been exposed to the full range of diagnostic tasks, we will fine-tune CDFM on these specific tasks.

\paragraph{Functional Asymmetry.} We first explore functional asymmetry in identifiability. We consider a family of bivariate additive-noise formulation $y = x + b \cdot x^3 + n$, where $b \in [-1, 1]$ controls the degree of nonlinearity and $q \in [0.5, 2.0]$ controls the non-Gaussianity of the noise distribution via the transformation $v \mapsto \text{sign}(v) \cdot |v|^q$ applied independently to both the cause $x \sim \mathcal{N}(0,1)$ and the noise $n \sim \mathcal{N}(0,1)$. At $b=0, q=1$, the model reduces to a linear Gaussian SCM, which is theoretically non-identifiable in both directions. Our goal is to see whether the foundation also adopts the theoretical results.

To ensure CDFM has been exposed to the full range of $(b, q)$ configurations, we fine-tune the pretrained model on bivariate ANM data, sampling $b \sim \mathcal{U}[-1, 1]$ and $q \sim \mathcal{U}[0.5, 2.0]$ independently per batch, with sample sizes $N \sim \mathcal{U}[100, 500]$.

\paragraph{Results.} Figure~\ref{fig:identifiability} presents the empirical findings on non-Gaussianity and nonlinearity. Figure~\ref{fig:identifiability_a} shows accuracy as a function of the non-Gaussianity parameter $q$ at $b=0$ (linear SCM). At $q = 1.0$ (Gaussian noise), the model achieves an accuracy around $0.5$, corresponding to chance-level performance, confirming that CDFM still obeys the theoretical results in the linear Gaussian case. As $q$ moves away from $1$, accuracy rises sharply to $1.00$ for both sub-Gaussian and super-Gaussian regimes. Figure~\ref{fig:identifiability_b} demonstrates the effect of nonlinearity at $q = 1$ (Gaussian noise). At $b = 0.0$ (purely linear), accuracy again remains at around $0.5$, while a slight nonlinearity ($|b| \geq 0.1$) suffices for identification ($0.97$--$1.00$). These results align with the theoretical results of the nonlinear additive noise model \cite{hoyerNonlinearCausalDiscovery2009}.

\begin{figure}[t]
    \centering
    \begin{subfigure}[t]{0.48\textwidth}
        \centering
        \includegraphics[width=\textwidth]{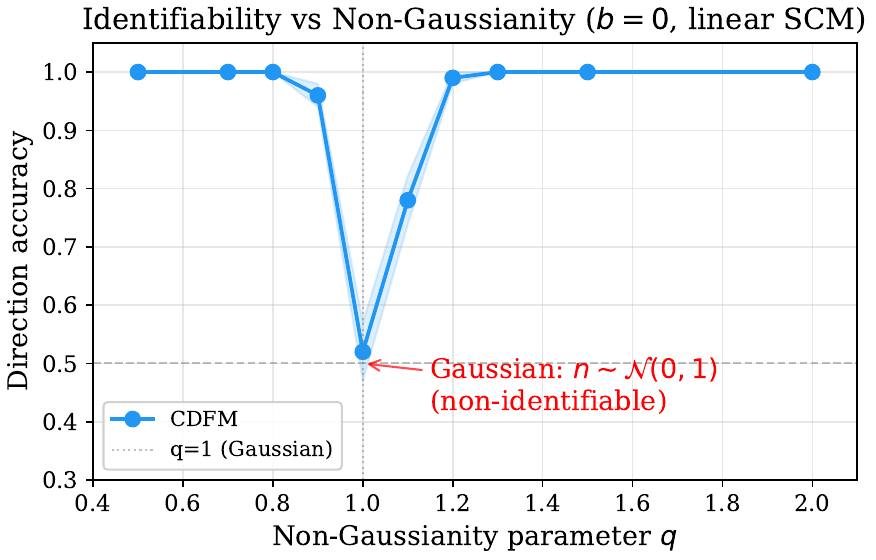}
        \caption{Non-Gaussianity parameter $q$ for linear SCMs ($b=0$).}
        \label{fig:identifiability_a}
    \end{subfigure}
    \hfill
    \begin{subfigure}[t]{0.48\textwidth}
        \centering
        \includegraphics[width=\textwidth]{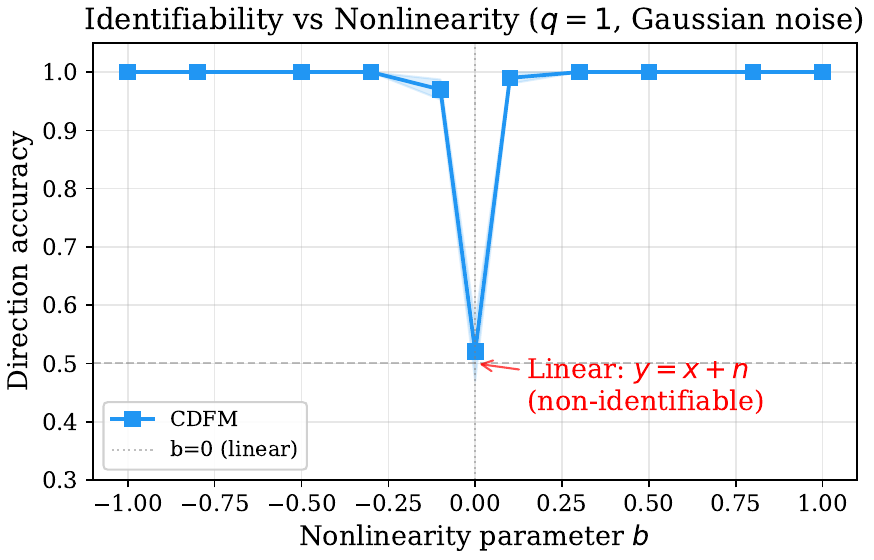}
        \caption{Nonlinearity parameter $b$ with Gaussian noise ($q=1$).}
        \label{fig:identifiability_b}
    \end{subfigure}
    \\[6pt]
    \begin{subfigure}[t]{0.48\textwidth}
        \centering
        \includegraphics[width=\textwidth]{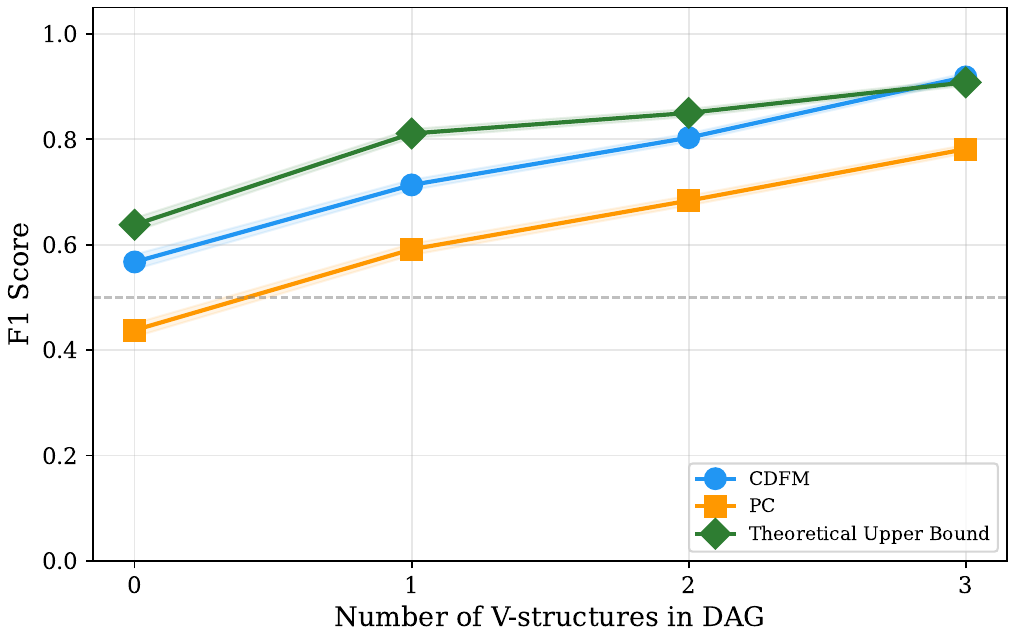}
        \caption{$F_1$ score vs.\ number of V-structures ($d=5$).}
        \label{fig:identifiability_c}
    \end{subfigure}
    \hfill
    \begin{subfigure}[t]{0.48\textwidth}
        \centering
        \includegraphics[width=\textwidth]{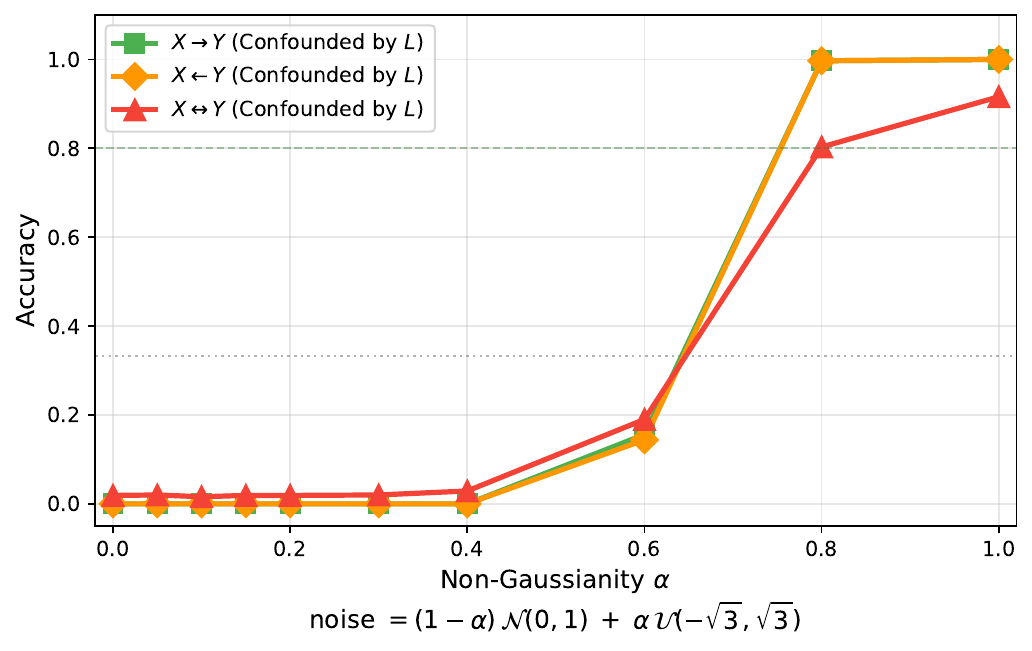}
        \caption{Latent confounding identifiability.}
        \label{fig:identifiability_d}
    \end{subfigure}
    \caption{\textbf{Empirical validation of identifiability.} (a)~Effect of non-Gaussianity on causal direction identification for linear SCMs. (b)~Effect of nonlinearity under Gaussian noise. (c)~$F_1$ score as a function of the number of V-structures, compared against PC and the theoretical upper bound. (d)~Three-class accuracy under latent confounding as a function of non-Gaussianity.}
    \label{fig:identifiability}
\end{figure}

\paragraph{Identifiability Through V-Structure.}

Under the causal Markov and faithfulness assumptions, observational conditional-independence relations generally identify a Markov equivalence class rather than a unique DAG, while unshielded colliders (also known as V-structures) provide compelled edge orientations, which offer an additional source of asymmetry beyond the functional asymmetry. Therefore, we further investigate whether CDFM can leverage causal information implied by conditional independence constraints.

To do so, we exclude additional directional information from functional asymmetry by conducting this experiment exclusively on linear Gaussian structural equation models. Therefore, any identifiable orientation must arise from conditional-independence constraints.

Then, we exhaustively enumerate all non-isomorphic DAGs on $d=5$ under sample size = 100 and group them according to their actual number of V-structures. We evaluate the $F_1$ score over CDFM, PC, and the Theoretical Upper Bound, which uses perfect d-separation on the true DAG, and the undetermined edges will be randomly oriented, providing a fairer comparison. To ensure the theoretical boundary is practically detectable, CDFM is fine-tuned on synthetic datasets generated from the same linear Gaussian model family.

\paragraph{Results.} Figure~\ref{fig:identifiability_c} presents the results of $F_1$ against the number of V-structures. CDFM's orientation performance improves as the number of collider structures increases, while PC exhibits the same qualitative tendency but is additionally affected by finite-sample errors in its conditional-independence tests. Moreover, CDFM approaches the theoretical upper bound, empirically confirming the theoretical identifiability boundary, showing that CDFM remains subject to classical causal identifiability theory.

\paragraph{Latent Confounding Identifiability.}
Finally, we examine whether CDFM can distinguish among three distinct causal scenarios under latent confounding: (i)~no direct edge between $X$ and $Y$, with only a latent confounder $L$ causing both ($X \leftarrow L \rightarrow Y$); (ii)~$X \rightarrow Y$ with confounding; and (iii)~$Y \rightarrow X$ with confounding. The model observes only the bivariate pair $(X, Y)$ and must classify the underlying relationship. For synthetic data, we follow the functional asymmetry with linear causal relationship $Y = aX+n$ where we control the non-Gaussianity of the noise distribution via varying $q$ by the transformation $v \mapsto \text{sign}(v) \cdot |v|^q$.

Figure~\ref{fig:identifiability_d} presents the 3-class classification accuracy for each scenario as a function of $q$. Two notable patterns emerge. First, the \textit{no-edge} scenario (pure confounding) is best identified in the sub-Gaussian regime ($q = 0.5$, accuracy $0.86$), where the distributional signature of a shared latent cause is most distinct from a direct causal edge. Accuracy degrades as $q$ approaches $1.0$ (Gaussian, accuracy $0.64$) and recovers partially for super-Gaussian noise. Second, the \textit{directed-edge} scenarios ($X \rightarrow Y$ and $Y \rightarrow X$) achieve their best accuracy at extreme super-Gaussian values ($q = 2.0$, accuracy $0.66$), suggesting that heavy-tailed non-Gaussianity provides the strongest signal for distinguishing the causal direction under confounding. Overall, these results demonstrate that CDFM exploits non-Gaussian distributional signatures to distinguish the three simulated relation types, which is consistent with classic identifiability theory.

These experiments collectively suggest that the theoretical boundary of CDFM is consistent with the classical identifiability theory.

\subsection{Auxiliary Analysis of Missing Value Imputation}

CDFM includes a quantile-reconstruction head to provide a self-supervised signal for learning conditional structure. Therefore, as a by-product of this work, we evaluate the imputation capability of CDFM on both continuous and discrete data. We consider two synthetic mechanisms: \textbf{continuous} data generated by the RFF mechanism, which uses nonlinear additive noise with random Fourier features, and \textbf{discrete} data generated by the CPT mechanism, which uses conditional probability tables with values in $\{0,1,2,3\}$.
For each mechanism, we use synthetic datasets, spanning $D\in [5,50]$ variables,
$N\in \{500,1000,2000\}$ samples, and 10 random seeds for each configuration. Missing values are generated under an MCAR scheme with missing rates in $\{0.1,0.2,0.3\}$.

CDFM performs imputation in a single forward pass using its built-in imputation head. Specifically, we extract the median quantile from its 21-level quantile prediction.
We compare CDFM against eight standard imputation methods:
(1)~\textbf{Mean} and (2)~\textbf{Median} column-wise imputation;
(3)~\textbf{KNN} ($k{=}5$) \cite{troyanskaya2001missing,pedregosa2011scikit};
(4)~\textbf{MICE} with BayesianRidge regression for 10 iterations \cite{vanbuuren2011mice};
(5)~\textbf{MissForest} \cite{stekhoven2012missforest} using RandomForest with 50 trees and 5 iterations;
(6)~\textbf{SoftImpute}, (7)~\textbf{IterativeSVD} \cite{rubinsteyn_fancyimpute}, and
(8)~\textbf{MatrixFactorization} \cite{rubinsteyn_fancyimpute}.
All methods are evaluated using mean absolute error (MAE) between the imputed and ground-truth values at the masked positions.

Table~\ref{tab:imputation} reports the MAE of all methods on both continuous (RFF)
and discrete (CPT) data.
CDFM one-shot imputation achieves the best overall performance in both regimes,
with an MAE of 0.452 on continuous data and 0.717 on discrete data.
Matrix-factorization-based methods (SoftImpute, IterativeSVD, MatrixFactorization)
perform poorly on discrete data, as their low-rank assumptions are violated by the
categorical structure.
Across missing rates, CDFM remains the top method for both regimes, with its advantage
over competitors widening as the missing rate increases from 0.1 to 0.3.

\begin{table}[t]
\centering
\caption{\textbf{Missing value imputation MAE $\downarrow$.}
CDFM vs. eight classical imputation methods on continuous (RFF) and
discrete (CPT) synthetic data.}
\label{tab:imputation}
\begin{tabular}{lcc}
\toprule
 & \textbf{Continuous} & \textbf{Discrete} \\
\midrule
CDFM                & \textbf{0.452$\pm$0.164} & \textbf{0.717$\pm$0.102} \\
MissForest          & 0.475$\pm$0.188 & 0.748$\pm$0.095 \\
MICE    & 0.500$\pm$0.163 & 0.760$\pm$0.097 \\
SoftImpute          & 0.505$\pm$0.150 & 0.834$\pm$0.117 \\
KNN ($k{=}5$)      & 0.573$\pm$0.184 & 0.780$\pm$0.095 \\
MatrixFactorization & 0.694$\pm$0.198 & 0.990$\pm$0.179 \\
Median              & 0.720$\pm$0.069 & 0.721$\pm$0.100 \\
Mean                & 0.727$\pm$0.067 & 0.779$\pm$0.093 \\
IterativeSVD        & 0.732$\pm$0.302 & 1.217$\pm$0.315 \\
\bottomrule
\end{tabular}
\end{table}

\begin{table}[t]
\centering
\caption{\textbf{Imputation MAE by missing rate.}
CDFM maintains its advantage across all missing rates on both continuous and discrete data.}
\label{tab:imputation_rate}
\resizebox{\textwidth}{!}{
\begin{tabular}{lccc|ccc}
\toprule
 & \multicolumn{3}{c}{\textbf{Continuous}} & \multicolumn{3}{c}{\textbf{Discrete}} \\
\cmidrule(lr){2-4} \cmidrule(lr){5-7}
\textbf{Method} & \textbf{0.1} & \textbf{0.2} & \textbf{0.3} & \textbf{0.1} & \textbf{0.2} & \textbf{0.3} \\
\midrule
CDFM                & \textbf{0.436$\pm$0.164} & \textbf{0.450$\pm$0.165} & \textbf{0.469$\pm$0.162} & \textbf{0.712$\pm$0.104} & \textbf{0.717$\pm$0.101} & \textbf{0.721$\pm$0.100} \\
MissForest          & 0.456$\pm$0.185 & 0.474$\pm$0.188 & 0.496$\pm$0.189 & 0.735$\pm$0.095 & 0.747$\pm$0.093 & 0.763$\pm$0.096 \\
IterativeImputer    & 0.463$\pm$0.162 & 0.502$\pm$0.161 & 0.536$\pm$0.158 & 0.752$\pm$0.097 & 0.754$\pm$0.095 & 0.775$\pm$0.098 \\
SoftImpute          & 0.487$\pm$0.153 & 0.505$\pm$0.151 & 0.524$\pm$0.146 & 0.823$\pm$0.113 & 0.835$\pm$0.117 & 0.845$\pm$0.120 \\
KNN ($k{=}5$)      & 0.541$\pm$0.192 & 0.572$\pm$0.184 & 0.605$\pm$0.172 & 0.770$\pm$0.094 & 0.782$\pm$0.096 & 0.788$\pm$0.095 \\
MatrixFactorization & 0.674$\pm$0.202 & 0.689$\pm$0.197 & 0.716$\pm$0.193 & 0.984$\pm$0.185 & 0.990$\pm$0.181 & 0.997$\pm$0.172 \\
Median              & 0.718$\pm$0.069 & 0.721$\pm$0.069 & 0.720$\pm$0.068 & 0.722$\pm$0.101 & 0.721$\pm$0.099 & 0.720$\pm$0.099 \\
Mean                & 0.726$\pm$0.068 & 0.728$\pm$0.068 & 0.728$\pm$0.066 & 0.779$\pm$0.093 & 0.780$\pm$0.094 & 0.778$\pm$0.092 \\
IterativeSVD        & 0.703$\pm$0.302 & 0.730$\pm$0.306 & 0.762$\pm$0.298 & 1.232$\pm$0.350 & 1.212$\pm$0.309 & 1.208$\pm$0.283 \\
\bottomrule
\end{tabular}}
\end{table}

\section{Conclusions}
In this work, we introduce CDFM, a causal foundation model that reframes causal discovery as inference over uncertain mechanisms rather than optimization under a single assumed causal model. By integrating mechanism inference, structural prediction, and reconstruction-based representation learning, CDFM learns transferable causal representations from diverse synthetic causal environments.

Extensive experiments demonstrate that CDFM achieves strong generalization across heterogeneous synthetic mechanisms, graph structures, and sample sizes, while maintaining competitive performance on real-world causal benchmarks, including physical causal systems and heterogeneous cause-effect datasets. Controlled identifiability experiments further show that CDFM follows theoretically meaningful causal behavior: it exploits available functional and graphical asymmetries while remaining uncertain in observationally non-identifiable regimes.

These findings suggest a promising direction for causal discovery: rather than designing increasingly specialized algorithms for individual assumptions, future causal models may benefit from learning broad representations over families of causal mechanisms. CDFM represents an initial step toward such general-purpose causal reasoning systems.

\bibliographystyle{abbrv}
\bibliography{ref}

\newpage

\appendix

\section{Proofs and Derivation}

\subsection{Derivation of the Variational ELBO}
\label{app:elbo_derivation}
To approximate the intractable marginal likelihood where the mechanism $M$ is integrated out, we introduce a variational distribution $Q(M|\mathbf{X})$ over the mechanism space $\mathcal{M}$. The joint log-probability $\log P(G, \mathbf{X})$ can be expanded and bounded as follows:
\begin{align}
\log P(G, \mathbf{X}) &= \log \int_{\mathcal{M}} P(G, M, \mathbf{X}) dM \\
&= \log \int_{\mathcal{M}} P(G, \mathbf{X} | M) P(M) dM \\
&= \log \int_{\mathcal{M}} P(G, \mathbf{X} | M) \frac{P(M)}{Q(M | \mathbf{X})} Q(M | \mathbf{X}) dM \\
&\ge \mathbb{E}_{Q(M | \mathbf{X})} \left[ \log \left( P(G, \mathbf{X} | M) \frac{P(M)}{Q(M | \mathbf{X})} \right) \right] \quad \text{(by Jensen's Inequality)} \\
&= \mathbb{E}_{Q(M | \mathbf{X})} \big[ \log P(G, \mathbf{X} | M) \big] - \mathbb{E}_{Q(M | \mathbf{X})} \left[ \log \frac{Q(M | \mathbf{X})}{P(M)} \right] \\
&= \mathbb{E}_{Q(M | \mathbf{X})} \big[ \log P(\mathbf{X} | G, M) + \log P(G | M) \big] - D_{\mathrm{KL}}\big( Q(M | \mathbf{X}) \parallel P(M) \big).
\end{align}
This completes the derivation of Eq.~\eqref{eq:elbo} in the main text.

\subsection{Proof of Theorem \ref{thm:impossibility}}

\begin{proof}
To provide an intuitive construction, we first assume that $X$ and $Y$ are continuous random variables. We will demonstrate that a valid Structural Causal Model can always be constructed for the causal direction $X \to Y$. 

Let $F_{Y|X}(y|x)$ denote the conditional cumulative distribution function of $Y$ given $X$. According to the inverse transform sampling theorem, there exists an auxiliary noise variable $N_Y \sim \mathcal{U}[0,1]$ that is independent of $X$, such that $Y = F^{-1}_{Y|X}(N_Y \mid X)$. Therefore, by defining the structural function as $f_Y(X, N_Y) := F^{-1}_{Y|X}(N_Y \mid X)$, we obtain $Y = f_Y(X, N_Y)$ with $N_Y \perp\!\!\!\perp X$. Hence, a valid SCM corresponding to the causal direction $X \to Y$ exists.

Applying the same argument to the reverse direction, there exists an independent noise variable $N_X \sim \mathcal{U}[0,1]$ such that $X = F^{-1}_{X|Y}(N_X \mid Y)$. Equivalently, defining $f_X(Y, N_X) := F^{-1}_{X|Y}(N_X \mid Y)$, we obtain $X = f_X(Y, N_X)$ where $N_X \perp\!\!\!\perp Y$. Therefore, for any continuous joint distribution $P_{X,Y}$, valid SCMs can be constructed for both causal directions $X \to Y$ and $Y \to X$. 

For more general cases, including discrete or mixed variables, the Functional Representation Lemma \cite{ElGamal2011Network} guarantees the existence of such independent uniform noise variables and measurable structural functions, yielding the exact same result. Consequently, without additional assumptions on the causal mechanisms, the causal direction is inherently unidentifiable from observational data alone.

\end{proof}

\subsection{Proof of Theorem \ref{thm:kl_known_mechanism}}

\begin{proof}
By definition, the expected log-likelihood of a graph $G$ under the true data-generating distribution is
\[
\mathcal{L}(G)
=
\mathbb{E}_{\mathbf{X}\sim P(\mathbf{X}|G^*,M^*)}
\left[
\log P(\mathbf{X}|G,M^*)
\right].
\]
Therefore, the difference between the expected log-likelihood of the true graph and that of an arbitrary alternative graph $G'$ is
\[
\begin{aligned}
\mathcal{L}(G^*)-\mathcal{L}(G')
&=
\mathbb{E}_{\mathbf{X}\sim P(\mathbf{X}|G^*,M^*)}
\left[
\log P(\mathbf{X}|G^*,M^*)
-
\log P(\mathbf{X}|G',M^*)
\right]\\
&=
\mathbb{E}_{\mathbf{X}\sim P(\mathbf{X}|G^*,M^*)}
\left[
\log
\frac{
P(\mathbf{X}|G^*,M^*)
}{
P(\mathbf{X}|G',M^*)
}
\right]\\
&=
D_{\mathrm{KL}}
\left[
P(\mathbf{X}|G^*,M^*)
\parallel
P(\mathbf{X}|G',M^*)
\right].
\end{aligned}
\]
Since the Kullback--Leibler divergence is always non-negative, we have
\[
D_{\mathrm{KL}}
\left[
P(\mathbf{X}|G^*,M^*)
\parallel
P(\mathbf{X}|G',M^*)
\right]
\geq 0,
\]
which proves that
\[
\mathcal{L}(G^*)\geq \mathcal{L}(G').
\]

To prove the strictness of the inequality, suppose that there exists a graph $G'\neq G^*$ such that $\mathcal{L}(G^*)=\mathcal{L}(G')$. This implies, $P(\mathbf{X}|G^*,M^*)=P(\mathbf{X}|G',M^*)$. Therefore, the two distinct graphs $G^*$ and $G'$ induce the same observational distribution under the known mechanism $M^*$. However, this contradicts the assumption that $M^*$ belongs to a strictly identifiable mechanism class, under which each graph corresponds to a unique induced distribution. Hence, no graph $G'\neq G^*$ can achieve the same expected log-likelihood as $G^*$.

Therefore, the equality holds if and only if $G'=G^*$.
\end{proof}
\subsection{Proof of Theorem \ref{thm:marginal_identifiability}}
\label{app:proof_Marginal_Likelihood}

To show the identifiability under the broad hypothesis space of causal mechanisms $\mathcal{M}$, we will need to first define the strict identifiability property from the set of causal prior mechanisms. Formally, we say the space of causal prior mechanisms is strictly identifiable if $P(\mathbf{X}|M^*,G^*)\ne P(X|M_{G'},G')$ holds for all alternative graphs $G' \neq G^*$, where $M^*$ and $G^*$ denote the true causal mechanism and true causal structure, respectively, and $M_{G'} = \arg\max_{M \in \mathcal{M}} \mathbb{E}_{\mathbf{X}}[\log P(\mathbf{X}|M,G')]$ represents the optimal mechanism choice under $G'$. This formulation serves as a direct extension from the traditional known causal mechanism setting, further suggesting that the entire space of causal prior mechanisms must possess the intrinsic property of mutual exclusivity. Under this structural foundation, the exact marginal likelihood uniquely isolates the true causal graph.

\begin{proof}
Let $\mathbf{X}^{(N)} =\{x_{1} ,x_{2} ,\dotsc ,x_{N} \}$ be a dataset of $N$ independent and identically distributed (i.i.d.) samples drawn from the true causal distribution $P^{*} (x)=P(x|M^{*} ,G^{*} )$. For any candidate graph $G$, the exact marginal likelihood of the dataset is given by the integral over the mechanism space $\mathcal{M}$:
\begin{equation}
P(\mathbf{X}^{(N)} |G)=\int _{\mathcal{M}} P(\mathbf{X}^{(N)} |M,G)P(M|G)dM=\int _{\mathcal{M}}\left(\prod _{i=1}^{N} P(x_{i} |M,G)\right) P(M|G)dM.
\end{equation}
We evaluate the scaled log marginal likelihood as $N\rightarrow \infty $. Following standard asymptotic Bayesian theory, the integral is asymptotically dominated by the maximum likelihood estimate over the parameter space, and we have:
\begin{equation}\label{eq:laplace_limit}
\lim _{N\rightarrow \infty }\frac{1}{N}\log P(\mathbf{X}^{(N)} |G)=\sup _{M\in \mathcal{M}}{}\mathbb{E}_{\mathbf{X}}[\log P(\mathbf{X} |M,G)] ,\ \ \text{a.s.}
\end{equation}
We now evaluate Eq.~\eqref{eq:laplace_limit} under two distinct structural scenarios. In case 1, the candidate graph is the true graph $G=G^{*}$. Since the true data-generating distribution is contained in the model class under $G^*$, the supremum is attained by any mechanism inducing $P(\mathbf X\mid M^*,G^*)$. By definition, we have
\begin{equation*}
\sup _{M\in \mathcal{M}}\mathbb{ E}_{\mathbf{X}}\left[\log P(\mathbf{X} |M,G^{*} )\right] =\mathbb{E}_{\mathbf{X}}\left[\log P(\mathbf{X} |M^{*} ,G^{*} )\right] .
\end{equation*}
In case 2, an alternative candidate graph $G=G'\neq G^{*}$. Let $M_{G'} \in \mathcal{M}$ be the optimal mechanism that the alternative graph topology can possibly choose to fit the data, and we have
\begin{equation}
\lim _{N\rightarrow \infty }\frac{1}{N}\log P(\mathbf{X}^{(N)} |G')=\mathbb{E}_{\mathbf{X}}[\log P(\mathbf{X}|M_{G'} ,G')].
\end{equation}
We then examine the difference between the two asymptotic limits:

\begin{equation}
\begin{aligned}
\Delta \mathcal{L} & =\mathbb{E}_{\mathbf{X}}\left[\log P(\mathbf{X} |M^{*} ,G^{*} )\right] -\mathbb{E}_{\mathbf{X}}[\log P(\mathbf{X}|M_{G'} ,G')]\\
 & =\mathbb{E}_{\mathbf{X}}\left[\log\frac{P(\mathbf{X} |M^{*} ,G^{*} )}{P(\mathbf{X} |M_{G'} ,G')}\right]\\
 & =D_{\mathrm{KL}}\bigl( P(\mathbf{X} |M^{*} ,G^{*} )\parallel P(\mathbf{X} |M_{G'} ,G')\bigr) .
\end{aligned}
\end{equation}
Since the mechanism class $\mathcal{M}$ is strictly identifiable, we have:
\begin{equation}
P(\mathbf{X} |M_{G'} ,G')\neq P(\mathbf{X} |M^{*} ,G^{*} ),
\end{equation}
guaranteeing that the KL divergence is strictly positive:
\begin{equation}
D_{\mathrm{KL}}\bigl( P(\mathbf{X}|M^{*} ,G^{*} )\parallel P(\mathbf{X}|M_{G'} ,G')\bigr)  >0.
\end{equation}
Consequently, we conclude that:
\begin{equation}
\lim_{N \to \infty} \frac{1}{N} \log P(\mathbf{X}^{(N)} | G^*) > \lim_{N \to \infty} \frac{1}{N} \log P(\mathbf{X}^{(N)} | G'), \quad \text{a.s.}, \quad \forall G' \neq G^*.
\end{equation}
This completes the proof that the exact marginal likelihood uniquely identifies the true causal graph $G^*$ almost surely as the sample size approaches infinity.
\end{proof}

\subsection{Proof of Corollary \ref{cor:elbo_identifiability}}

\begin{proof}
Let $\mathcal{J} (G;\mathbf{X}^{(N)} ):=\mathbb{E}_{Q^{*} (M\mid \mathbf{X}^{(N)} )}\bigl[\log P(G\mid M)\bigr]$ denotes the structural inference in CDFM. We prove that its maximizer converges almost surely to the true graph $G^{*}$ and $\displaystyle Q^{*}$ denotes the optimum variational posterior obtained by pretraining.

We begin with the connection between the structural inference and the joint likelihood $\displaystyle \log P(\mathbf{X}^{(N)} ,G)$, which can be expressed by,
\begin{equation*}
\log P(\mathbf{X}^{(N)} ,G)=\mathcal{L} (G,\mathbf{X}^{(N)} ;Q)+D_{\mathrm{KL}}\left( Q(M\mid \mathbf{X}^{(N)} )\parallel P(M\mid G,\mathbf{X}^{(N)} )\right) ,
\end{equation*}
where
\begin{equation*}
\mathcal{L} (G,\mathbf{X} ;Q)=\mathbb{E}_{Q(M\mid \mathbf{X} )}\bigl[\log P (\mathbf{X} \mid G,M)+\log P (G\mid M)\bigr] -D_{\mathrm{KL}}\bigl( Q(M\mid \mathbf{X} )\parallel P(M)\bigr)
\end{equation*}
is the fitted ELBO of CDFM given in Eq. \eqref{eq:elbo}:

Then, given that the foundation model has sufficient capacity to reach the optimum variational ELBO, we obtain 
\begin{equation}
\log P(\mathbf{X}^{(N)} ,G)=\mathcal{L} (G,\mathbf{X}^{(N)} ;Q^{*} ),
\end{equation}
where the optimal variational posterior $\displaystyle Q^{*} (M\mid \mathbf{X}^{(N)} )$ approaches $P(M\mid G,\mathbf{X}^{(N)} )$. Thus, since
\begin{equation}
\log P(\mathbf{X}^{(N)} ,G)=\log P(\mathbf{X}^{(N)} \mid G)+\log P(G),
\end{equation}
and under a uniform graph prior $\displaystyle P(G)$, we have
\begin{equation}
\arg\max_{G}\log P(\mathbf{X}^{(N)} ,G)=\arg\max_{G}\log P(G\mid \mathbf{X}^{(N)} )=\arg\max_{G}\mathcal{L} (G,\mathbf{X}^{(N)} ;Q^{*} ).
\end{equation}

Subsequently, we establish the connection between $\arg\max_{G} \mathcal{J} (G;\mathbf{X}^{(N)} )$ and $\displaystyle \arg\max_{G}\mathcal{L} (G,\mathbf{X}^{(N)} ;Q^{*} )$. First, by Theorem~\ref{thm:marginal_identifiability}, strict identifiability of the mechanism space implies that the exact marginal likelihood uniquely identifies the true graph:
\begin{equation}
\lim _{N\rightarrow \infty }\frac{1}{N}\log P(\mathbf{X}^{(N)} \mid G^{*} ) >\lim _{N\rightarrow \infty }\frac{1}{N}\log P(\mathbf{X}^{(N)} \mid G'),\ \ \forall G'\neq G^{*} ,\ \ \text{a.s.}
\end{equation}
Since the graph prior $P(G)$ is uniform, Bayes' rule gives
\begin{equation*}
P(G\mid \mathbf{X}^{(N)} )\propto P(\mathbf{X}^{(N)} \mid G).
\end{equation*}
Therefore, for any $G'\neq G^{*}$,
\begin{equation}\label{eq:marginal_likelihood_separation}
\begin{aligned}
\frac{P(G'\mid \mathbf{X}^{(N)} )}{P(G^{*} \mid \mathbf{X}^{(N)} )} & =\frac{P(\mathbf{X}^{(N)} \mid G')}{P (\mathbf{X}^{(N)} \mid G^{*} )} .
\end{aligned}
\end{equation}
Taking logarithms and dividing by $N$, Eq.~\eqref{eq:marginal_likelihood_separation} implies that this ratio converges to zero almost surely. Hence
\begin{equation}\label{eq:graph_posterior_concentration}
P (G^{*} \mid \mathbf{X}^{(N)} )\rightarrow 1,\ \ \ \ P (G'\mid \mathbf{X}^{(N)} )\rightarrow 0,\ \ \forall G'\neq G^{*} ,\ \ \text{a.s.}
\end{equation}
Thus, the posterior distribution over graphs converges almost surely to a Dirac measure concentrated at $G^{*}$. Then, we have
\begin{equation}
\begin{aligned}
P (M\mid \mathbf{X}^{(N)} ) & =\sum _{G\in \mathcal{G}} P(M\mid G,\mathbf{X}^{(N)} )P(G\mid \mathbf{X}^{(N)} )\\
 & =P(M\mid G^{*} ,\mathbf{X}^{(N)} )P(G^{*} \mid \mathbf{X}^{(N)} )+\sum _{G'\neq G^{*}} P(M\mid G',\mathbf{X}^{(N)} )P (G'\mid \mathbf{X}^{(N)} ).
\end{aligned}
\end{equation}
By Eq.~\eqref{eq:graph_posterior_concentration}, the first posterior weight converges to one and all remaining posterior weights vanish almost surely, and we obtain
\begin{equation}
P (M\mid G^{*} ,\mathbf{X}^{(N)} )\overset{\mathrm{a.s.}}{\sim } P (M\mid \mathbf{X}^{(N)} ),\ \ \ \ N\rightarrow \infty .
\end{equation}
Therefore, since the optimal variational posterior $\displaystyle Q^{*} (M\mid \mathbf{X}^{(N)} )$ approach to $P(M\mid G,\mathbf{X}^{(N)} )$, we have $\displaystyle Q^{*} (M\mid \mathbf{X}^{(N)} )\overset{\mathrm{a.s.}}{\sim } P (M\mid \mathbf{X}^{(N)} )$. This justifies using the amortized encoder $Q(M\mid \mathbf{X}^{(N)} )$ in place of the true graph-conditioned mechanism posterior asymptotically.

Asymptotically, the structural inference simplifies to $\mathcal{J} (G;\mathbf{X}^{(N)} )=\mathbb{E}_{P(M|\mathbf{X}^{(N)} )} [\log P(G|M)]$. By applying Jensen's inequality, we have:
\begin{equation*}
\mathcal{J} (G;\mathbf{X}^{(N)} )=\mathbb{E}_{P(M\mid \mathbf{X}^{(N)} )}\bigl[\log P(G\mid M)\bigr] \leq \log\mathbb{E}_{P(M\mid \mathbf{X}^{(N)} )}\bigl[ P(G\mid M)\bigr] =\log P(G\mid \mathbf{X}^{(N)} ).
\end{equation*}
For the true graph $G^*$, the posterior $P(G^* \mid \mathbf{X}^{(N)}) \to 1$, which implies $\log P(G^* \mid \mathbf{X}^{(N)}) \to 0$, and thus, $\mathcal{J} (G^*;\mathbf{X}^{(N)} ) \to 0$. Thus, for $G^*$, both sides of Jensen's inequality approach zero, making the bound asymptotically tight:
\begin{equation}
\mathcal{J} (G;\mathbf{X}^{(N)} )=\log P(G\mid \mathbf{X}^{(N)} ).
\end{equation}
Since the uniform prior ensures $\arg\max_{G}\log P(G\mid \mathbf{X}^{(N)} )=\arg\max_{G}\log P(\mathbf{X}^{(N)} \mid G)$, and Theorem~\ref{thm:marginal_identifiability} guarantees the marginal likelihood is strictly optimum at $G^{*}$, it follows that:
\begin{equation*}
\hat{G} =\arg\max_{G\in \mathcal{G}}\mathcal{J} (G;\mathbf{X}^{(N)} )=G^{*} ,\ \ \text{a.s. as } N\rightarrow \infty .
\end{equation*}
This completes the proof.
\end{proof}

\section{Synthetic Mechanism Abbreviations}\label{app:mechanism_names}

\renewcommand{\thetable}{A\arabic{table}}
\setcounter{table}{0}

\begin{table}[H]
\centering
\caption{Abbreviations of synthetic mechanism families used in Table~\ref{tab:mechanism_robustness}.}
\label{tab:mechanism_names}
\begin{tabular}{ll}
\toprule
Abbreviation & Description \\
\midrule
CAM & Causal additive model \\
CPT & Conditional probability table for discrete variables \\
Discrete ANM & Discrete additive noise model \\
Linear & Linear structural equation model \\
Linear-Hetero & Linear model with heteroscedastic noise \\
Linear-Ordinal & Linear model with ordinal or discretized observations \\
Meas. Error & Mechanism with additional measurement error \\
Physical & Physics-inspired nonlinear mechanism \\
Physical-Hetero & Physical mechanism with heteroscedastic noise \\
PNL & Post-nonlinear causal model \\
RFF & Random Fourier feature nonlinear mechanism \\
RFF-Hetero & RFF mechanism with heteroscedastic noise \\
RFF-Ordinal & RFF mechanism with ordinal or discretized observations \\
Rounded & Mechanism with rounded observations \\
Time-Lag & Time-lagged causal mechanism \\
\bottomrule
\end{tabular}
\end{table}

\end{document}